\documentclass[12pt]{l4dc2022}


\title[Model Selection for Generic Reinforcement Learning]{Model Selection for Generic Reinforcement Learning}
\usepackage{times}


\usepackage{algorithm}
\usepackage{algorithmic}

\usepackage{enumerate,enumitem}

\usepackage{amssymb}
\usepackage{comment}
\usepackage{mathtools}
\usepackage{graphicx}

\newtheorem{theorem}{Theorem}
\newtheorem{remark}{Remark}

\newtheorem{definition}{Definition}
\newtheorem{lemma}{Lemma}
\newtheorem{proposition}{Proposition}
\newtheorem{assumption}{Assumption}


\usepackage{comment}
\usepackage{epsf}
\usepackage{graphicx}

\usepackage{color}

\usepackage{mathtools}
\usepackage{amsfonts}
\usepackage{amsmath, bm}
\usepackage{amssymb}

\usepackage{pgfplots}
\pgfplotsset{width=5.75cm,compat=1.9}


\usepackage{textcomp}
\usepackage{xcolor}

\DeclareSymbolFont{extraup}{U}{zavm}{m}{n}
\DeclareMathSymbol{\vardiamond}{\mathalpha}{extraup}{87}

\long\def\comment#1{}



\newcommand{\poly}{\mathsf{poly}}
\newcommand{\polylog}{\mathsf{polylog}}

\newcommand{\norms}[1]{\left\|#1 \right\|}

\newcommand{\eucnorm}[1]{\left\|#1 \right\|}


\newcommand{\inprod}[2]{\ensuremath{\langle #1 , \, #2 \rangle}}


\newcommand{\E}{\ensuremath{{\mathbb{E}}}}

\newcommand{\Prob}{\ensuremath{{\mathbb{P}}}}



\DeclareMathOperator{\Var}{var}

\DeclareMathOperator*{\argmin}{argmin}
\DeclareMathOperator*{\argmax}{argmax}

\DeclareMathOperator{\dimn}{dim}







\newcommand{\real}{\ensuremath{\mathbb{R}}}


\newcommand{\R}{\ensuremath{\mathbb{R}}}



\newcommand{\cA}{\mathcal{A}}
\newcommand{\cB}{\mathcal{B}}

\newcommand{\cD}{\mathcal{D}}
\newcommand{\cE}{\mathcal{E}}
\newcommand{\cF}{\mathcal{F}}
\newcommand{\cG}{\mathcal{G}}

\newcommand{\cS}{\mathcal{S}}
\newcommand{\cP}{\mathcal{P}}
\newcommand{\cV}{\mathcal{V}}
\newcommand{\cM}{\mathcal{M}}
\newcommand{\cN}{\mathcal{N}}

\newcommand{\cL}{\mathcal{L}}
\newcommand{\cX}{\mathcal{X}}
\newcommand{\cZ}{\mathcal{Z}}
\newcommand{\cO}{\mathcal{O}}

\author{%
 \Name{Avishek Ghosh} \Email{a2ghosh@ucsd.edu}\\
 \addr UCSD
 \AND
 \Name{Sayak Ray Chowdhury} \Email{sayak@bu.edu}\\
 \addr Boston University%
 \AND
 \Name{Kannan Ramachandran} \Email{kannanr@eecs.berkeley.edu}\\
 \addr UC Berkeley%
}



\begin{document}

\setlength{\belowdisplayskip}{3pt} \setlength{\belowdisplayshortskip}{3pt}
\setlength{\abovedisplayskip}{3pt} \setlength{\abovedisplayshortskip}{3pt}

\maketitle
\vspace{-4mm}
\begin{abstract}%
 We address the problem of model selection for the finite horizon episodic Reinforcement Learning (RL) problem where the transition kernel $P^*$ belongs to a family of models $\mathcal{P}^*$ with finite metric entropy. In the model selection framework, instead of $\mathcal{P}^*$, we are given $M$ nested families of transition kernels $\cP_1 \subset \cP_2 \subset \ldots \subset \cP_M$. We propose and analyze a novel algorithm, namely \emph{Adaptive Reinforcement Learning (General)} (\texttt{ARL-GEN}) that adapts to the smallest such family where the true transition kernel $P^*$ lies. \texttt{ARL-GEN} uses the Upper Confidence Reinforcement Learning (\texttt{UCRL}) algorithm with value targeted regression as a blackbox and puts a model selection module at the beginning of each epoch. Under a mild separability assumption on the model classes, we show that \texttt{ARL-GEN} obtains a regret of $\Tilde{\mathcal{O}}(d_{\mathcal{E}}^*H^2+\sqrt{d_{\mathcal{E}}^* \mathbb{M}^* H^2 T})$, with high probability, where $H$ is the horizon length, $T$ is the total number of steps, $d_{\mathcal{E}}^*$ is the Eluder dimension and $\mathbb{M}^*$ is the metric entropy corresponding to $\mathcal{P}^*$. Note that this regret scaling matches that of an oracle that knows $\mathcal{P}^*$ in advance. We show that the cost of model selection for \texttt{ARL-GEN} is an additive term in the regret having a weak dependence on $T$. Subsequently, we remove the separability assumption and consider the setup of linear mixture MDPs, where the transition kernel $P^*$ has a linear function approximation. With this low rank structure, we propose novel adaptive algorithms for model selection, and obtain (order-wise) regret identical to that of an oracle with knowledge of the true model class.
\end{abstract}

\begin{keywords}%
  Model selection, Reinforcement Learning, Function approximation%
\end{keywords}
\textbf{Full paper:} The full paper is available at https://tinyurl.com/3tfrzcju

\section{INTRODUCTION}
\label{sec:intro}
A Markov decision process (MDP) \citep{puterman2014markov} is a classical framework to model a reinforcement learning (RL) environment, where an agent interacts with the environment by taking successive decisions and observe rewards. One of the objectives in RL is to maximize the total reward accumulated over multiple rounds, or equivalently minimize the \emph{regret} in comparison with an optimal policy \citep{jaksch2010near}. Regret minimization is useful in several sequential decision-making problems such as portfolio allocation and sequential investment, dynamic resource allocation in communication systems, recommendation systems, etc. In these settings, there is
no separate budget to purely explore the
unknown environment; rather, exploration and exploitation need to be carefully balanced. 

In many applications (e.g., AlphaGo, robotics), the space of states and actions can be very large or even infinite, which makes RL challenging, particularly in generalizing learnt
knowledge across unseen states and actions.
In recent years, we have witnessed an explosion in the RL literature to tackle this challenge, both in theory (see, e.g., \cite{osband2014model,chowdhury2019online,ayoub2020model,wang2020provably,kakade2020information}), and in practice (see, e.g., \cite{mnih2013playing,williams2017model}). The most related work to ours is by \citet{ayoub2020model}, which proposes an algorithm, namely \texttt{UCRL-VTR}, for model-based
RL without any structural assumptions, and it is based on the upper confidence RL and value-targeted regression principles. The regret of \texttt{UCRL-VTR} depends on the \emph{eluder dimension} \citep{russo2013eluder} and the \emph{metric entropy} of the corresponding family of distributions $\cP$ in which the unknown transition model $P^*$ lies. In most practical cases, however, the class $\mathcal{P}$ given to (or estimated by) the RL agent is quite pessimistic; meaning that $P^*$ actually lies in a small subset of $\mathcal{P}$ (e.g., in the game of Go, the learning is possible without the need for visiting all the states \citep{silver2017mastering}).
This issue becomes more interpretable in the setup where $P^*$ assumes a low-rank structure via a linear model (see, e.g., \cite{jin2019provably,yang2019reinforcement,jia2020model,zhou2020nearly}). RL problems with linear function approximation are often parameterized by a $\theta^* \in \real^d$. In this setting, the regret usually depends on (a) the norm $\|\theta^*\|$ and (b) the dimension $d$. In particular, \citet{jia2020model} show that the \texttt{UCRL-VTR} algorithm, when instantiated in the linear setting, achieves a regret of $\mathcal{O}(bd \sqrt{H^3 T)}$, where $b$ is an upper bound over $\|\theta^*\|$, $H$ is the horizon length and $T$ is the total number of steps.
In this setting, the choice of $\|\theta^*\|$ and dimension or sparsity of $\theta^*$ is crucial. If these quantities are under-specified, the regret bounds may fail to hold, and the learning algorithms may incur linear regret. Furthermore, if these quantities are over-specified (which is the case in most RL applications, e.g., the linear quadratic regulators \citep{abbasi2011regret}), the regret bounds are unnecessarily large. Hence, one needs algorithms that exploit the structure of the problem and adapt to the problem complexity with high confidence.

The problem of model selection can be formally stated as follows -- we are given a family of $M$ nested hypothesis classes $\mathcal{P}_1 \subset \mathcal{P}_2 \subset \ldots \subset \mathcal{P}_M $, where each class posits a plausible model class for the underlying RL problem.
The true model $P^*$ lies in a model class $\cP_{m*}$, which is assumed to be contained in the family of nested classes. Model selection guarantees refer to algorithms whose regret scales in the complexity of the \emph{smallest model class containing the true model $P^*$}, even though the algorithm is not aware of that a priori. 
Model selection is well studied in the contextual bandit setting. In this setting, minimax optimal regret guarantees can be obtained by exploiting the structure of the problem along with an eigenvalue assumption \citep{osom, foster_model_selection, ghosh2021problem}. In this work, we address the problem of model selection in RL environments. We consider both the setups where the underlying transition distribution has no structural assumption as well as when it admits a low rank linear function approximation. In the RL framework, the question of model selection has received little attention. In a series of works, \citet{pacchiano2020regret,aldo} consider the corralling framework of \cite{corral} for
 contextual bandits and reinforcement learning. While the corralling framework is versatile, the price for this is that the cost of model selection is multiplicative rather than additive. In particular, for the special case of linear bandits and linear reinforcement learning, the regret scales as $\sqrt{T}$ in time with an additional multiplicative factor of $\sqrt{M}$, while the regret scaling with time is strictly larger than $\sqrt{T}$ in the general contextual bandit. These papers treat all the hypothesis classes as bandit arms, and hence work in a (restricted) partial information setting, and as a consequence explore a lot, yielding worse regret. On the other hand, we consider all $M$ classes at once (full information setting) and do inference, and hence explore less and obtain lower regret.

Very recently, \citet{vidya} study the problem of model selection in RL with function approximation. Similar to the \emph{active-arm elimination} technique employed in standard multi-armed bandit (MAB) problems \citep{eliminate}, the authors eliminate the model classes that are dubbed misspecified, and obtain a regret of $\mathcal{O}(T^{2/3})$. On the other hand, our framework is quite different in the sense that we consider model selection for RL with \emph{general} transition structure. Moreover, our regret scales as $\mathcal{O}(\sqrt{T})$. Note that the model selection guarantees we obtain in the linear MDPs are partly influenced by \cite{ghosh2021problem}, where model selection for linear contextual bandits are discussed. However, there are a couple of subtle differences: (a) for linear contextual framework, one can perform pure exploration, and \cite{ghosh2021problem} crucially leverages that and (b) the contexts in linear contextual framework is assumed to be i.i.d, whereas for linear MDPs, the contexts are implicit and depend on states, actions and transition probabilities.

\paragraph{Outline and Contributions:}
The first part of the paper deals with the setup where we consider \emph{any general} model class that are totally bounded, i.e., for arbitrary precision, the metric entropy is bounded. Notice that this encompasses a significantly larger class of problems compared to the problems with function approximation. Assuming a nested family of transition kernels, we propose an adaptive algorithm, namely \emph{Adaptive Reinforcement Learning-General} (\texttt{ARL-GEN}). Assuming the transition families are well-separated, \texttt{ARL-GEN} constructs a test statistic and thresholds it to identify the correct family. We show that this \emph{simple} scheme achieves the regret $\Tilde{\mathcal{O}}(d_{\mathcal{E}}^*H^2+\sqrt{d_{\mathcal{E}}^* \mathbb{M}^* H^2 T})$, where $d_{\mathcal{E}}^*$ is the \emph{eluder dimension} and $\mathbb{M}^*$ is the \emph{metric entropy} corresponding to the transition family $\cP_{m^*}$ in which the true model $P^*$ lies. The regret bound shows that \texttt{ARL-GEN} adapts to the true problem complexity, and the cost of model section is only $\mathcal{O}(\log T)$, which is minimal compared to the total regret.
In the second part of the paper, we focus on the linear function approximation framework, where the transition kernel is parameterized by a vector $\theta^* \in \R^d$. With norm ($\|\theta^*\|$) and sparsity ($\|\theta^*\|_0$) as problem complexity parameters, we propose two algorithms, namely \emph{Adaptive Reinforcement Learning-Linear (norm)} and \emph{Adaptive Reinforcement Learning-Linear (dim)}, respectively, that adapt to these complexities -- meaning that the regret depends on the actual problem complexities $\|\theta^*\|$ and $\|\theta^*\|_0$. Here also, the costs of model selection are shown to be \emph{minor lower order} terms.

\subsection{Related Work}
\paragraph{Model Selection in Online Learning:} Model selection for bandits are only recently being studied \citep{ghosh2017misspecified,osom}. These works aim to identify whether a given problem instance comes from contextual or standard setting. For linear contextual bandits, with the dimension of the underlying parameter as a complexity measure, \citet{foster_model_selection,ghosh2021problem} propose efficient algorithms that adapts to the \emph{true} dimension of the problem. While \cite{foster_model_selection} obtains a regret of $\mathcal{O}(T^{2/3})$, \cite{ghosh2021problem} obtains a $\mathcal{O}(\sqrt{T})$ regret (however, the regret of \cite{ghosh2021problem} depends on several problem dependent quantities and hence not instance uniform). Later on, these guarantees are extended to the generic contextual bandit problems without linear structure \citep{ghosh2021modelgen,krishnamurthy2021optimal}, where $\mathcal{O}(\sqrt{T})$ regret guarantees are obtained. The algorithm {\ttfamily Corral} was proposed in \cite{corral}, where the optimal algorithm for each model class is casted as an expert, and the forecaster obtains low regret with respect to the best expert (best model class). The generality of this framework has rendered it fruitful in a variety of different settings; see, for example \cite{corral, arora2021corralling}.


\paragraph{RL with Function Approximation:} 
Regret minimization in RL under function approximation is first considered in 
\cite{osband2014model}. It makes explicit model-based assumptions and the regret bound depends on the eluder dimensions of the models. In contrast, \cite{yang2019reinforcement} considers a low-rank linear transition model and propose a model-based algorithm with regret $\mathcal{O}(\sqrt{d^3H^3T})$. Another line of work parameterizes the $Q$-\emph{functions} directly, using state-action
feature maps, and develop model-free algorithms with regret $\mathcal{O}(\texttt{poly}(dH)\sqrt{T})$ bypassing the need for fully
learning the transition model \citep{jin2019provably,wang2019optimism,zanette2020frequentist}. A recent line of work \citep{wang2020provably,yang2020provably} generalize these approaches by designing algorithms that work with general and neural function approximations, respectively.




\subsection{Preliminaries}

\paragraph{Notation:} For any $n \in \mathbb{N}$, $[n]$ denote the set of integers $\{1,2,\ldots,n\}$. $\gamma_{\min}(A)$ denotes the minimum eigenvalue of the matrix $A$. $\mathbb{B}_d^1$ denotes the unit ball in $\R^d$ and $\mathbb{S}_{d}^+$ denotes the set of all $d\times d$ positive definite matrices. For functions $f,g:\cX \to \R$, $(f-g)(x):=f(x)-g(x)$ and $(f-g)^2(x):=\left(f(x)-g(x)\right)^2$ for any $x \in \cX$. For any $P:\cZ \to \Delta(\cX)$, we denote $(Pf)(z):=\int_{\cX}f(x)P(x|z)dx$ for any $z \in \cZ$, where $\Delta(\cX)$ denotes the set of signed distributions over $\cX$.

\paragraph{Regret Minimization in Episodic MDPs:} An episodic MDP is denoted by $\cM(\mathcal{S},\mathcal{A},H, P^*, r)$, where $\mathcal{S}$ is the state space, $\mathcal{A}$ is the action space (both possibly infinite), $H$ is the length of each episode, $P^*:\cS \times \cA \to \Delta(\cS)$ is an (unknown) transition kernel (a function mapping state-action pairs to signed distribution over the state space) and $r: \mathcal{S}\times \mathcal{A} \to [0,1]$ is a (known) reward function. In episodic MDPs, a (deterministic) policy $\pi$ is given by a collection of $H$ functions $(\pi_1,\ldots,\pi_H)$, where each $\pi_h:\cS \to \cA$ maps a state $s$ to an action $a$. In each episode, an initial state $s_1$ is first picked by the environment (assumed to be fixed and history independent). Then, at each step $h \in [H]$, the agent observes the state $s_h$, picks an action $a_h$ according to $\pi_h$, receives a reward $r(s_h,a_h)$, and then transitions to the next state $s_{h+1}$, which is drawn from the conditional distribution $P^*(\cdot| s_h,a_h)$. The episode ends when the terminal state $s_{H+1}$ is reached. For each state-action pair $(s,a)\in \mathcal{S} \times \mathcal{A}$ and step $h \in [H]$, we define action values $Q^{\pi}_h(s,a)$ and and state values $V_h^{\pi}(s)$ corresponding to a policy $\pi$ as \begin{align*}
    Q^{\pi}_h(s,a)\!=\!r(s,a) \!+\! \mathbb{E}\!\left[\!\sum\nolimits_{h'=h+1}^H \!\!\!\!r(s_{h'},\! \pi_{h'}(s_{h'}\!)\!)\!\!\mid\!\! s_h \!=\! s, a_h \!=\! a\!\right]\!,\quad
    V^{\pi}_h(s)\!=\! Q^{\pi}_h\big(s,\pi_h(s)\big)~,
\end{align*}
where the expectation is with respect to the randomness of the transition distribution $P^*$. It is not hard to see that $Q_h^\pi$ and $V_h^\pi$ satisfy the Bellman equations: 
\begin{align*}
    Q^{\pi}_h(s,a) = r(s,a) + (P^* V^{\pi}_{h+1})(s,a)\,,\;\; \forall h \in [H],\quad \text{with $V_{H+1}^\pi(s)=0$ for all $s \in \cS$.}
\end{align*}


A policy $\pi^*$ is said to be optimal if it maximizes the value for all states $s$ and step $h$ simultaneously, and the corresponding optimal value function is denoted by $V^*_{h}(s)=\sup_{\pi }V^{\pi}_{h}(s)$ for all $h \in [H]$, where the supremum is over all (non-stationary) policies. The agent interacts with the environment for $K$ episodes to learn the unknown transition kernel $P^*$ and thus, in turn, the optimal policy $\pi^*$. At each episode $k \ge 1$, the agent chooses a policy $\pi^k := (\pi^k_1,\ldots,\pi^k_H)$ and a trajectory $(s_h^k,a_h^k,r(s_h^k,a_h^k),s_{h+1}^k)_{h\in [H]}$ is generated. The performance of the learning agent is measured by the cumulative (pseudo) regret accumulated
over $K$ episodes, defined as
\begin{align*}
    R(T) := \sum\nolimits_{k=1}^K\left[ V_1^{*}(s_1^k)-V_1^{\pi^k}(s_1^k)\right],
\end{align*}
where $T=KH$ is total steps in $K$ episodes.

\section{GENERAL MDPs}
\label{sec:gen}
 In this section, we consider general MDPs without any structural assumption on the unknown transition kernel $P^*$.
 In the standard setting \citep{ayoub2020model}, it is assumed that $P^*$ belongs to a known family of transition models $\cP$. Here, in contrast to the standard setting, we do not have the knowledge of $\cP$. Instead, we are given $M$ nested families of transition kernels $\cP_1 \subset \cP_2 \subset \ldots \subset \cP_M$. The smallest such family where the true transition kernel $P^*$ lies
is denoted by $\cP_{m^*}$, where $m^* \in [M]$. However, we do not know the index $m^*$, and our goal is to propose adaptive
algorithms such that the regret depends on the complexity of the family $\cP_{m^*}$. 
In order to achieve this, we need a separability condition on the nested model classes. 



\begin{assumption}[Separability]\label{ass:sep}
There exists a constant $\Delta > 0$ such that for any bounded function $V:\cS \to [0,H]$, transition kernel $P \in \cP_{m^*-1}$, and state-action pair $(s,a)\in \cS \times \cA$, we have
\begin{equation*}
\left((PV)(s,a) - (P^*V)(s,a) \right)^2 \ge \Delta
\end{equation*}
\end{assumption}
This assumption ensures that given a state-action pair, expected values under the true model is well-separated from expected values under any model from non-realizable classes. We need this to hold uniformly over all states and actions since we do not assume any additional structure over state-action spaces. Note that separability is standard and assumptions of similar nature appear in a wide range of model selection problems, specially in the setting of contextual bandits \citep{ghosh2021modelgen,krishnamurthy2021optimal}. Separability condition is also quite standard in statistics, specifically in the area
of clustering and latent variable modelling \citep{balakrishnan2017statistical,mixture-many,ghosh2019max}.

Note that the assumption breaks down for any constant function $V$. However, we will be invoking this assumption with the value functions computed by the learning algorithm (see~\eqref{eq:bellman-rec}). For reward functions that \emph{vary sufficiently} with states and actions, and transition kernels that admit densities, the chance of getting hit by constant value functions is admissibly low. In case the rewards are constant, every policy would anyway incur zero regret rendering the learning problem trivial. The value functions appear in the separability assumption in the first place since we are interested in minimizing the regret. Instead, if one cares only about learning the true model, then separability of transition kernels under some suitable notion of distance (e.g., the KL-divergence) might suffice. Note that in \citet{ghosh2021modelgen,krishnamurthy2021optimal}, the regret is defined in terms of the regression function and hence the separability is assumed on the regression function itself. Model selection without separability is kept as an interesting future work.


\subsection{Algorithm: Adaptive Reinforcement Learning - General (\texttt{ARL-GEN})}

In this section, we provide a novel model selection algorithm \texttt{ARL-GEN} (Algorithm~\ref{algo:generic}) that use successive refinements
over epochs. We use {{\ttfamily UCRL-VTR}} algorithm of \cite{ayoub2020model} as our base algorithm, and add a model
selection module at the beginning of each epoch. In other words, over multiple epochs, we
successively refine our estimates of the proper model class where the true transition kernel $P^*$ lies.

\paragraph{The Base Algorithm:} {{\ttfamily UCRL-VTR}}, in its general form, takes a family of transition models $\cP$ and a confidence level $\delta \in (0,1]$ as its input. At each episode $k$, it maintains a (high-probability) confidence set $\cB_{k-1} \subset \cP$ for the unknown model $P^*$ and use it for optimistic planning. First, it finds the transition kernel
    $P_k = \argmax_{P \in \cB_{k-1}} V^*_{P,1}(s_1^k)$,
   where $V^*_{P,h}$ denote the optimal value function of an MDP with transition kernel $P$ at step $h$.
\texttt{UCRL-VTR} then computes, at each step $h$, the optimal value function $V_h^k:=V^*_{P_k,h}$ under the kernel $P_k$ using dynamic programming. Specifically, starting with $V_{H+1}^k(s,a)=0$ for all pairs $(s,a)$, it defines for all steps $h=H$ down to $1$, 
\begin{equation} \label{eq:bellman-rec}
\begin{split}
 Q_h^k(s,a)  = r(s,a) + (P_kV_{h+1}^k)(s,a),\quad
        V_h^k(s) = \max\nolimits_{a \in \cA} Q_h^k(s,a).
        \end{split}
\end{equation}
Then, at each step $h$, {{\ttfamily UCRL-VTR}} takes the action that maximizes the $Q$-function estimate, i,e. it chooses $a_h^k = \argmax_{a \in \cA} Q_h^k(s_h^k,a)$. 
Now, the confidence set is updated using all the data gathered in the episode. First, {{\ttfamily UCRL-VTR}} computes an estimate of $P^*$ by employing a non-linear value-targeted regression model with data $\big(s_h^j,a_h^j,V_{h+1}^j(s_{h+1}^j)\big)_{j \in [k],h \in [H]}$. 
Note that
$\E[V_{h+1}^k(s_{h+1}^k)|\cG_{h-1}^k] = (P^* V_{h+1}^k)(s_h^k,a_h^k)$,
where $\cG_{h-1}^k$ denotes the $\sigma$-field summarizing the information available just before $s_{h+1}^k$ is observed. This naturally leads to the estimate $\widehat P_{k} = \argmin_{P \in \cP}\cL_k(P)$, where
\begin{align}\label{eq:gen-est}
     \!\!\cL_k(P)\!:=\!\!\sum\nolimits_{j=1}^{k}\!\sum\nolimits_{h=1}^{H}\! \left(\!V_{h+1}^j(s_{h+1}^j)\!-\!(PV_{h+1}^j)(s_h^j,a_h^j) \!\right)^2.
\end{align}
The confidence set $\cB_k$ is then updated by enumerating the set of all transition kernels $P \in \cP$ satisfying $\sum_{j=1}^{k}\sum_{h=1}^{H} \!\left(\!(PV_{h+1}^j)(s_h^j,a_h^j)\!-\!(\widehat P_{k}V_{h+1}^j)(s_h^j,a_h^j) \!\right)^2 \!\!\le\! \beta_{k}(\delta)$ with the confidence width being defined as $\beta_k(\delta)\!:=\!8H^2\!\log\!\left(\!\frac{2\cN\!\left(\!\cP,\frac{1}{k H},\norms{\cdot}_{\infty,1}\!\right)\!}{\delta}\!\right)\! +\!  4 H^2 \!\left(\!2\!+\!\!\sqrt{\!2 \log\! \left(\!\frac{4kH(kH+1)}{\delta} \!\right)\!} \!\right)$, where $\cN(\cP,\cdot,\cdot)$ denotes the covering number of the family $\cP$.
\footnote{For any $\alpha > 0$, $\cP^\alpha$ is an $(\alpha,\norms{\cdot}_{\infty,1})$ cover of $\cP$ if for any $P \in \cP$ there exists an $P'$ in $\cP^\alpha$ such that $\norms{P' - P}_{\infty,1}:=\sup_{s,a}\int_{\cS}|P'(s'|s,a)-P(s'|s,a)|ds' \le \alpha$.} Then, one can show that $P^*$ lies in the confidence set $\cB_k$ in all episodes $k$ with probability at least $1-\delta$. Here, we consider a slight different expression of $\beta_k(\delta)$ as compared to \cite{ayoub2020model}, but the proof essentially follows the same technique. Please refer to Appendix~\ref{app:gen} for further details, and for all the proofs of this section.

\paragraph{Our Approach:} We consider doubling epochs - at each epoch $i \ge 1$, \texttt{UCRL-VTR} is run for $k_i=2^i$ episodes. At the beginning of $i$-th epoch, using all the data of
previous epochs, we add a model selection module as follows. First, we compute, for each family $\cP_m$, the transition kernel $\widehat P_m^{(i)}$, that minimizes the empirical loss $\cL_{\tau_{i-1}}(P)$ over all $P \in \cP_m$ (see \eqref{eq:gen-est}), where $\tau_{i-1}:=\sum_{j=1}^{k-1}k_j$ denotes the total number of episodes completed before epoch $i$. Next, we compute the average empirical loss $T_m^{(i)}:=\frac{1}{\tau_{i-1}H}\cL_{\tau_{i-1}}(\widehat P_m^{(i)})$ for the model $\widehat P_m^{(i)}$. Finally, we compare $T_m^{(i)}$ to a pre-calculated threshold $\gamma_i$, and pick the transition family for which $T_m^{(i)}$ falls
below such threshold (with smallest $m$, see Algorithm~\ref{algo:generic}). After selecting the family, we run \texttt{UCRL-VTR} for this family with confidence level $\delta_i=\frac{\delta}{2^i}$, where $\delta \in (0,1]$ is a parameter of the algorithm.

\begin{algorithm}[t!]
  \caption{Adaptive Reinforcement Learning - General -- \texttt{ARL-GEN}}
  \begin{algorithmic}[1]
 \STATE  \textbf{Input:} Confidence parameter $\delta$,  function classes $\cP_1 \subset \cP_2 \subset \ldots \subset \cP_M$, thresholds $\lbrace\gamma_i\rbrace_{i\ge 1}$
  \FOR{epochs $i=1,2 \ldots$}
  \STATE Set $\tau_{i-1}=\sum_{j=1}^{i-1}k_j$
  \FOR{function classes $m=1,2 \ldots,M$}
  \STATE Compute $\widehat P^{(i)}_m = \argmin\nolimits_{P \in \cP_m}\!\sum\nolimits_{k=1}^{\tau_{i-1}}\!\sum\nolimits_{h=1}^{H} \!\left(V_{h+1}^k(s_{h+1}^k)\!-\!(PV_{h+1}^k)(s_h^k,a_h^k) \right)^2$
  \STATE Compute $T^{(i)}_m = \frac{1}{\tau_{i-1}H}\!\sum\nolimits_{k=1}^{\tau_{i-1}}\!\sum\nolimits_{h=1}^{H} \!\!\left(V_{h+1}^k(s_{h+1}^k)\!-\!(\widehat P^{(i)}_m V_{h+1}^k)(s_h^k,a_h^k) \right)^2$
   \ENDFOR
  \STATE Set $m^{(i)}=\min\lbrace m \in [M]: T_m^{(i)} \le \gamma_i\rbrace$, $k_i=2^i$ and $\delta_i=\delta/2^i$ 
  \STATE Run {{\ttfamily UCRL-VTR}} for the family $\cP_{m^{(i)}}$ for $k_i$ episodes with confidence level $\delta_i$
    \ENDFOR
  \end{algorithmic}
  \label{algo:generic}
\end{algorithm}


\subsection{Analysis of \texttt{ARL-GEN}}

First, we present our main result which states that the model selection procedure of \texttt{ARL-GEN} (Algorithm~\ref{algo:generic}) succeeds with high probability after a certain number of epochs. To this end, we denote by $\mathbb{M}_m=\log(\cN(\cP_{m},1/T,\norms{\cdot}_{\infty,1}))$ the metric entropy (with scale $1/T$) of the family $\cP_{m}$. 


\begin{lemma}[Model selection of \texttt{ARL-GEN}]
\label{lem:gen_infinite}
Fix a $\delta \in (0,1]$ and suppose Assumption~\ref{ass:sep} holds. Suppose the thresholds are set as $\gamma_i = T^{(M)}_m + \frac{\sqrt{i}}{2^{i/2}},$ where $T_M^{(M)}$ is the test statistic for the $M$-th model class, and $i$ is the epoch number. 
Then, with probability at least $1-3M\delta$, \texttt{ARL-GEN} identifies the correct model class $\mathcal{P}_{m^*}$ from epoch $i \geq i^*$, where epoch length of $i^*$ satisfies
\begin{align*}
    2^{i^*} \!\!\!\geq\! C \max\! \left \lbrace\!\! \frac{H^3 \!\log\! K}{\Delta^2} \!\log\!\frac{1}{\delta}, H\left(\!\mathbb{M}_M\!+\! \log\!\frac{1}{\delta}\!\!\right) \right \rbrace
    \end{align*}
for a sufficiently large universal constant $C>1$.
\end{lemma}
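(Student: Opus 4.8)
The plan is to show that, on a single high-probability event, whenever $2^i$ exceeds the stated threshold the rule $m^{(i)}=\min\{m:T_m^{(i)}\le\gamma_i\}$ returns exactly $m^*$. Write $\tau=\tau_{i-1}=2^i-2$ and, for the trajectories gathered in the first $\tau$ episodes, set $\varepsilon_h^k:=V_{h+1}^k(s_{h+1}^k)-(P^*V_{h+1}^k)(s_h^k,a_h^k)$; then $\E[\varepsilon_h^k\mid\cG_{h-1}^k]=0$ and $|\varepsilon_h^k|\le H$. Expanding the square in \eqref{eq:gen-est} gives, for any kernel $P$, the identity $\tfrac1{\tau H}\cL_\tau(P)=N+X(P)+D(P)$, where $N:=\tfrac1{\tau H}\sum_{k\le\tau,h\le H}(\varepsilon_h^k)^2$ is a \emph{noise floor common to every model class}, $X(P):=\tfrac2{\tau H}\sum_{k,h}\varepsilon_h^k\,\big((P^*-P)V_{h+1}^k\big)(s_h^k,a_h^k)$ is a \emph{cross term}, and $D(P):=\tfrac1{\tau H}\sum_{k,h}\big((P^*-P)V_{h+1}^k\big)^2(s_h^k,a_h^k)\ge 0$ is the \emph{empirical discrepancy}. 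Evaluated at $P=\widehat P_m^{(i)}$ this reads $T_m^{(i)}=N+X_m^{(i)}+D_m^{(i)}$; since $N$ cancels in any difference $T_m^{(i)}-T_{m'}^{(i)}$, and the threshold is anchored at $T_M^{(i)}$, the whole argument reduces to controlling $X_m^{(i)}$ and $D_m^{(i)}$.

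\emph{Step 1 (uniform control of the cross terms --- the hard part).} The key estimate is a self-normalized, Freedman-type bound on the martingale $\sum_{k,h}\varepsilon_h^k\,g(s_h^k,a_h^k)$ with $g=(P^*-P)V_{h+1}^k$, made uniform over $P\in\cP_M$ and over the value functions produced by the algorithm. Covering $\cP_M$ at scale $\sim 1/(\tau H)$ in $\|\cdot\|_{\infty,1}$, using $\|(P-P')V\|_\infty\le H\|P-P'\|_{\infty,1}$ to bound the discretization error, union-bounding over the $\le\tau H$ realized targets $V_{h+1}^k$, and taking an anytime version so all epochs are covered simultaneously, one obtains: on an event $\cE$ of probability at least $1-3M\delta$ (the factor $M$ from the union over classes, the constant from the handful of concentration events, nothing extra for epochs), for all $i$ and all $m\in[M]$,
\[
|X_m^{(i)}|\;\le\; c\sqrt{\tfrac{H\,D_m^{(i)}\,\big(\mathbb{M}_M+\log(1/\delta)\big)}{\tau_{i-1}}}\;+\;\tfrac{c\,H\,\big(\mathbb{M}_M+\log(1/\delta)\big)}{\tau_{i-1}}\;\le\;\tfrac12 D_m^{(i)}+\tfrac{c'\,H\,\big(\mathbb{M}_M+\log(1/\delta)\big)}{\tau_{i-1}},
\]
the last step by AM--GM; a cruder Hoeffding--Azuma bound (no variance proxy) also gives $|X_m^{(i)}|\le c\,H^{3/2}\sqrt{(\log K)\log(1/\delta)/\tau_{i-1}}$, needed when $D_m^{(i)}$ is large. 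This is where the real difficulty lies: both the minimizer $\widehat P_m^{(i)}$ \emph{and} the regression targets $V_{h+1}^k$ are data-dependent, so a per-$P$ argument does not suffice, and the fast $1/\tau_{i-1}$ rate the realizable case needs is only available through the variance-adapted form.

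\emph{Step 2 (realizable classes pass the threshold).} If $m\ge m^*$ then $P^*\in\cP_m$, so optimality of $\widehat P_m^{(i)}$ gives $\cL_\tau(\widehat P_m^{(i)})\le\cL_\tau(P^*)$, i.e. $X_m^{(i)}+D_m^{(i)}\le 0$; combining with Step~1 (solving the resulting quadratic in $\sqrt{D_m^{(i)}}$) yields the localization bound $D_m^{(i)}\le c\,H(\mathbb{M}_M+\log(1/\delta))/\tau_{i-1}$ and hence $|X_m^{(i)}|$ of the same order. In particular $T_m^{(i)}\le N$ for every $m\ge m^*$, and $T_M^{(i)}\ge N-c\,H(\mathbb{M}_M+\log(1/\delta))/\tau_{i-1}$, so $T_{m^*}^{(i)}-T_M^{(i)}\le c\,H(\mathbb{M}_M+\log(1/\delta))/\tau_{i-1}$ (using $\mathbb{M}_{m^*}\le\mathbb{M}_M$). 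The threshold's additive slack $\sqrt i/2^{i/2}$ is calibrated precisely so that this gap is dominated once $2^i\gtrsim H(\mathbb{M}_M+\log(1/\delta))$ --- the second term in the bound on $2^{i^*}$ --- whence $T_{m^*}^{(i)}\le\gamma_i$.

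\emph{Step 3 (misspecified classes fail the threshold; conclusion).} If $m<m^*$ then $\widehat P_m^{(i)}\in\cP_m\subseteq\cP_{m^*-1}$, and each $V_{h+1}^k$ maps $\cS$ into $[0,H]$, so Assumption~\ref{ass:sep} applies to every summand and gives $D_m^{(i)}\ge\Delta$. Using $|X_m^{(i)}|\le\tfrac12 D_m^{(i)}+(\mathrm{noise})$ from Step~1, $T_m^{(i)}=N+X_m^{(i)}+D_m^{(i)}\ge N+\tfrac12 D_m^{(i)}-(\mathrm{noise})\ge N+\tfrac{\Delta}{2}-(\mathrm{noise})$, while $T_M^{(i)}\le N+(\mathrm{small})$ by Step~2; here the dominant noise term, via the Hoeffding--Azuma form, is $\lesssim H^{3/2}\sqrt{(\log K)\log(1/\delta)/\tau_{i-1}}$, which drops below $\Delta/4$ exactly once $\tau_{i-1}\gtrsim H^3(\log K)\log(1/\delta)/\Delta^2$ --- the first term in the bound on $2^{i^*}$. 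For such $i$, $T_m^{(i)}-T_M^{(i)}\ge\Delta/4>\sqrt i/2^{i/2}$, i.e. $T_m^{(i)}>\gamma_i$. Combining Steps~2 and~3: on $\cE$, as soon as $2^i\ge C\max\{H^3\log K\log(1/\delta)/\Delta^2,\ H(\mathbb{M}_M+\log(1/\delta))\}$, every $m<m^*$ is rejected and $m^*$ is accepted, so $m^{(i)}=m^*$. The only genuine obstacle is Step~1 (the uniform, variance-adapted martingale concentration for data-dependent minimizers and value targets); Steps~2--3 are bookkeeping that separates the signal $\Delta$ from the overfitting scale $H(\mathbb{M}_M+\log(1/\delta))/\tau_{i-1}$ against the chosen slack.
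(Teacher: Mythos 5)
Your proof follows essentially the same route as the paper's: the same three-term decomposition of $T_m^{(i)}$ into a noise floor, a cross term, and a discrepancy term; the same covering-plus-martingale concentration for the cross term (your variance-adapted bound followed by AM--GM is precisely what the paper obtains from the sub-Gaussian MGF bound with $\lambda=-1/(4H^2)$, namely $|X_m^{(i)}|\le\tfrac12 D_m^{(i)}+O\bigl(H(\mathbb{M}_M+\log(1/\delta))/\tau_{i-1}\bigr)$); the same use of minimizer optimality for realizable classes and of Assumption~\ref{ass:sep} for the misspecified ones. The only real difference is that you cancel the common noise floor $N$ in the difference $T_m^{(i)}-T_M^{(i)}$, whereas the paper concentrates $N$ around a constant conditional variance $\sigma^2$ via Azuma--Hoeffding; your version quietly sidesteps the paper's implicit homoscedasticity assumption and its somewhat informal lower bound on $T_M^{(i)}$, but the structure of the argument is unchanged.
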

\begin{remark}[Dependence on the biggest class]
\label{rem:biggest-class-dependence}
Note that we choose a threshold that depends on the epoch number and the test statistic of the biggest class. Here we crucially exploit the fact that the biggest class always contains the true model class and use this to design the threshold.
\end{remark}
\textit{Proof idea.} In order to do model selection, we first obtain upper bounds on the test statistics $T^{(i)}_m$ for model classes that includes $\mathcal{P}^*$. We accomplish this by carefully defining a martingale difference sequence that depends on the value function estimates $V^k_{h+1}$, and invoking Azuma-Hoeffding inequality. We then obtain a lower bound on $T^{(i)}_m$ for model classes not containing $\mathcal{P}^*$ by leveraging Assumption~\ref{ass:sep} (separability). Combining the above two bounds yields the desired result. 

\paragraph{Regret Bound:} In order to present our regret bound, we define, for each model model class $\cP_m$, a collection of functions $\cF_m := \left\lbrace f:\cS \times \cA \times \cV_m \to \R\right\rbrace$ such that any $f \in \cF_m$ satisfies $f(s,a,V) = (PV) (s,a)$ for some $P \in \cP_m$,
where $\cV_m:=\lbrace V^*_{P,h}:P \in \cP_m ,h \in [H]\rbrace$ denotes the set of optimal value functions under the transition family $\cP_m$.
By one-to-one correspondence, we have $\cF_1 \subset \cF_2 \subset \ldots \subset \cF_M$, and
the complexities of these function classes determine the learning complexity of the RL problem under consideration. We characterize the complexity of each function class $\cF_m$ by its \emph{eluder dimension}, which is defined as follows.
\begin{definition}[Eluder dimension]\label{def:eluder}
The $\varepsilon$-eluder dimension $\dimn_{\cE}(\cF_m,\varepsilon)$ of the function class $\cF_m$ is the length of the longest sequence $\lbrace(s_i,a_i,V_i)\rbrace_{i=1}^{n} \subseteq \cS \times \cA \times \cV_m$ of state-action-optimal value function tuples under the transition family $\cP_m$ such that for some $\varepsilon' \ge \varepsilon$ and for each $ i \in\lbrace 2,\ldots,n\rbrace$, we have $\sup\nolimits_{f_1,f_2 \in \cF_m} \big(f_1-f_2\big)(s_i,a_i,V_i) > \varepsilon'~,$ given that $\sqrt{\sum\nolimits_{j=1}^{i-1}(f_1-f_2)^2(s_i,a_i,V_i)} \leq \varepsilon'$.
\end{definition}
The notion of eluder dimension was first
introduced by \cite{russo2013eluder} to characterize the complexity of function classes, and since then it has been widely used \citep{osband2014model,ayoub2020model,wang2020provably}.
We denote by $d_{\cE}^*=\dimn_{\cE}(\cF_{m^*},1/T)$ the $(1/T)$-eluder dimension of the function class $\cF_{m^*}$ corresponding to the (realizable) family $\cP_{m^*}$. Furthermore, we denote by $\mathbb{M}^*:=\mathbb{M}_{m^*}$ the metric entropy of the family $\cP_{m^*}$.
Then, armed with Lemma~\ref{lem:gen_infinite}, we obtain the following regret bound.\footnote{For ease
of representation, we assume the transition kernel $P^*$  to be fixed for all steps $h$. Our results extends naturally to the setting, where there are $H$ different kernels. This would only add a multiplicative $\sqrt{H}$ factor in the regret bound \citep{jin2019provably}. Moreover, our results can be extended to the setting where the rewards are also unknown.}
\begin{theorem}[Cumulative regret of \texttt{ARL-GEN}]
\label{thm:general}
Suppose the conditions of Lemma~\ref{lem:gen_infinite} hold. Then, for any $\delta \!\in\! (0,1]$, running \texttt{ARL-GEN} for $K$ episodes yields a regret bound
 \begin{align*}
     R(T)  \!\leq\! \mathcal{O}\!\! \left(\!\! \max\! \left \lbrace\!\! \frac{\!H^3 \log\! K}{\Delta^2} \!\log\!\frac{1}{\delta},\! H\!\!\left(\!\!\mathbb{M}_M\!\!+\! \log\!\frac{1}{\delta}\!\!\right) \!\!\right \rbrace\!\! \right)\! \!+\! \mathcal{O}\!\!\left(\!\!H^2 d_{\cE}^* \log\! K 
      \!\!+\! H\! \sqrt{\!Td_{\cE}^*\!\!\left(\!\!\mathbb{M}^*\!\!+\!\log\!\frac{1}{\delta}\!\right)\!\! \log\! K\log\!\frac{T}{\delta}\!}   \!\right)\!
 \end{align*}
 with probability at least $1- 3 M\delta-2\delta$. \footnote{One can choose $\delta = 1/\poly(M)$ to obtain a high-probability bound which only adds an extra $\log M$ factor.}
\end{theorem}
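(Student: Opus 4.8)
The plan is to decompose the total regret into two pieces corresponding to the two phases implied by Lemma~\ref{lem:gen_infinite}: the ``burn-in'' epochs $i < i^*$ during which the model selection module may still be selecting an over-specified (or even mis-specified in the sense of too-large) class, and the ``good'' epochs $i \ge i^*$ during which, on the high-probability event of Lemma~\ref{lem:gen_infinite}, the selected class is exactly $\cP_{m^*}$. For the burn-in phase, each episode contributes at most $H$ to the regret (since all value functions lie in $[0,H]$ and rewards are in $[0,1]$), so the total regret accumulated before epoch $i^*$ is at most $H \sum_{i < i^*} 2^i \le H \cdot 2^{i^*}$. Plugging in the bound on $2^{i^*}$ from Lemma~\ref{lem:gen_infinite} gives the first additive term $\mathcal{O}\!\left(\max\{ H^3 (\log K)/\Delta^2 \cdot \log(1/\delta),\, H(\mathbb{M}_M + \log(1/\delta))\}\right)$; note the extra $H$ relative to $2^{i^*}$ is absorbed into the $H^3$ and $H$ factors already present, modulo constants, so I would be slightly careful with exponents here but it is routine.

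For the good phase $i \ge i^*$, I would invoke the regret guarantee of the base algorithm \texttt{UCRL-VTR} run on the (correct) class $\cP_{m^*}$. The key point is that on each such epoch $i$ we run \texttt{UCRL-VTR} for $k_i = 2^i$ episodes with confidence level $\delta_i = \delta/2^i$, and the per-epoch regret of \texttt{UCRL-VTR} on $\cP_{m^*}$ with $k_i$ episodes is, by the analysis of \citet{ayoub2020model} (adapted to the confidence width $\beta_k(\delta)$ stated in the excerpt), of order $H^2 d_{\cE}^* \log(k_i H) + H\sqrt{k_i H \, d_{\cE}^* (\mathbb{M}^* + \log(1/\delta_i)) \log(k_i H)}$ with probability at least $1-\delta_i$ (here I use $\mathbb{M}^* = \mathbb{M}_{m^*}$ and $d_{\cE}^* = \dimn_{\cE}(\cF_{m^*},1/T)$, upper bounding all scale-dependent covering numbers by their $1/T$ versions). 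Summing the $\sqrt{k_i H}$ terms over $i \le \log_2 K$ gives, by the geometric-sum / Cauchy–Schwarz argument, a bound of order $H\sqrt{T d_{\cE}^*(\mathbb{M}^* + \log(1/\delta))\log K \log(T/\delta)}$ — the $\log(T/\delta)$ coming from $\log(1/\delta_i) = \log(1/\delta) + i\log 2 \le \log(T/\delta)$ and the extra $\log K$ from the number of epochs — while summing the $H^2 d_{\cE}^* \log(k_i H)$ terms gives $\mathcal{O}(H^2 d_{\cE}^* \log K \cdot \log(TH))$, absorbed (up to log factors) into the stated $H^2 d_{\cE}^* \log K$ term. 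A union bound over the at most $\log_2 K$ epochs of the failure probabilities $\delta_i = \delta/2^i$ contributes at most $\delta \sum_i 2^{-i} \le 2\delta$, which together with the $3M\delta$ failure probability of Lemma~\ref{lem:gen_infinite} yields the claimed success probability $1 - 3M\delta - 2\delta$.

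The step I expect to be the main obstacle is the clean transfer of the \texttt{UCRL-VTR} regret bound into the epoched setting. There are two subtleties: (i) the value functions $V_{h+1}^k$ used as regression targets in epoch $i$ were generated by policies that themselves depend on the class selected in that epoch, so I must argue that conditioning on the event ``$m^{(i)} = m^*$ for all $i \ge i^*$'' does not destroy the martingale structure underpinning the confidence sets — this should follow because that event is measurable with respect to the data up to epoch $i^*-1$ together with the (unconditional) high-probability events, so the per-epoch analysis of \texttt{UCRL-VTR} still applies within each epoch as a fresh instance with a valid (shifted) confidence level; and (ii) the eluder-dimension / potential-function argument of \citet{ayoub2020model} bounds a cumulative sum over episodes within a single run, and restarting \texttt{UCRL-VTR} at each epoch (rather than continuing it) means I pay the $H^2 d_{\cE}^*$ additive term once per epoch rather than once overall — this is precisely why a $\log K$ factor multiplies that term, and I would state this explicitly. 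Everything else (the geometric sums, the Cauchy–Schwarz bound $\sum_i \sqrt{2^i} \le (\sqrt{2}/(\sqrt2-1))\sqrt{2^{\,i_{\max}}} = O(\sqrt{K})$, the union bounds) is routine bookkeeping.
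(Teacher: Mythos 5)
Your proposal is correct and follows essentially the same route as the paper's proof: bound the burn-in regret before epoch $i^*$ by $H\cdot 2^{i^*}$ via Lemma~\ref{lem:gen_infinite}, then for epochs $i\ge i^*$ sum the per-epoch \texttt{UCRL-VTR} regret bounds of \citet{ayoub2020model} with $\delta_i=\delta/2^i$, using $\sum_i\sqrt{k_i}=O(\sqrt{K})$, $N=O(\log K)$, and the stated $\beta_{k_i}$ bounds, with the identical union-bound accounting $3M\delta+2\sum_i\delta_i\le 3M\delta+2\delta$. Your two flagged subtleties (the conditioning/martingale issue and the once-per-epoch $H^2 d_{\cE}^*$ cost yielding the $\log K$ factor) are handled implicitly in the paper in exactly the way you describe, and your caution about the extra factor of $H$ in the burn-in term is warranted --- the appendix version of that term indeed carries one more power of $H$ than the theorem statement.
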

The first term in the regret bound captures the cost of model selection -- the cost suffered before accumulating enough samples to infer the correct model class (with high probability). It has weak (logarithmic) dependence on the number of episodes $K$ and hence considered as a minor term, in the setting where $K$ is large. Hence, model selection is essentially \emph{free} upto log factors. Let us now have a close look at this term. It depends on the metric entropy of the biggest model class $\mathcal{P}_M$. This stems from the fact that 
the thresholds $\lbrace\gamma_i\rbrace_{i \ge 1}$ depends on the test statistic of $\cP_M$ (see Remark~\ref{rem:biggest-class-dependence}). We believe that, without additional assumptions, one can't get rid of this (minor) dependence on the complexity of the biggest class.

The second term is the major one ($\sqrt{T}$ dependence on total number of steps), which essentially is the cost of learning the true kernel $P^*$. Since in this phase, we  basically run \texttt{UCRL-VTR} for the correct model class, our regret guarantee matches to that of an oracle with the apriori knowledge of the correct class. Note that if we simply run a non model-adaptive algorithm (e.g. \texttt{UCRL-VTR}) for this problem, the regret would be $\widetilde{\mathcal{O}}(H\sqrt{T d_{\cE,M} \mathbb{M}_M})$, where $d_{\cE,M}$ denotes the eluder dimension of the largest model class $\mathcal{P}_M$. In contrast, by successively testing and thresholding, our algorithm adapts to the complexity of the smallest function class containing the true model class. 

\begin{remark}[Dependence on $\Delta$] Dependence on the separation $\Delta$ is reflected in the minor term of the regret bound. If the separation is small, it is difficult for \texttt{ARL-GEN} to separate out the model classes. Hence, it requires additional exploration, and as a result the regret increases. It is worth noting that \texttt{ARL-GEN} does not require any knowledge of $\Delta$. Rather, it \emph{adapts} to the separation present in the problem. Another 
interesting fact of Theorem~\ref{thm:general} is that it does not require any minimum separation across model classes. This is in sharp contrast with existing results in statistics (see, e.g. \cite{balakrishnan2017statistical,mixture-many}). Even if $\Delta$ is quite small, Theorem~\ref{thm:general} gives a model selection guarantee. Now, the cost of separation appears anyways in the minor term, and hence in the long run, it does not effect the overall performance of the algorithm.

\end{remark}

\section{LINEAR
KERNEL MDPs}
\label{sec:lin}

In this section, we consider a special class of MDPs called \emph{linear kernel MDPs} \citep{jia2020model}. Roughly speaking, it means that the transition kernel $P^*$ can be represented as a linear function of a given feature map $\phi:\mathcal{S}\times \mathcal{A}\times \mathcal{S} \to \mathbb{R}^d$. Formally, we have the following definition. 

\begin{definition}[Linear kernel MDP]
An MDP $\cM\left(\mathcal{S},\mathcal{A},H, P^*, r\right)$ is called a $b$-bounded linear kernel MDP if there exists a \emph{known} feature mapping $\phi:\mathcal{S}\times \mathcal{A}\times \mathcal{S} \to \mathbb{R}^d$ and an \emph{unknown} vector $\theta^* \in \mathbb{R}^d$ with $\eucnorm{\theta^*} \le b$ such that $P^*(s'| s,a) = \inprod{\phi(s,a,s')}{\theta^*}$ for any $(s,a,s') \in \mathcal{S}\times \mathcal{A}\times\mathcal{S}$.
\end{definition} 

The MDP is parameterized by the unknown parameter $\theta^*$, and a natural measure of complexity of the problem is the dimension or sparsity (number of non-zero coordinates) of $\theta^*$. We propose an adaptive algorithm \texttt{ARL-LIN(dim)} that tailors to the sparsity $\|\theta^*\|_0 $ of $\theta^*$ (Algorithm~\ref{algo:dim}). 

\begin{algorithm}[t!]
  \caption{Adaptive Reinforcement Learning - Linear -- \texttt{ARL-LIN(dim)}}
  \begin{algorithmic}[1]
 \STATE \textbf{Input:} Initial phase length $k_0$, confidence level $\delta \in (0,1]$, norm upper bound $b$
 \STATE Initialize estimate of $\theta^*$ as $\widehat{\theta}^{(0)} = \mathbf{1}$
 \FOR{epochs $i=0,1,2 \ldots$}
 \STATE Set $k_i = 36^{i} k_0$ and  $\delta_i = \delta/2^{i}$
 \STATE Refine estimate of non-zero coordinates: $\cD^{(i)}:= \{i : |\widehat{\theta}^{(i)}| \geq (0.5)^{i+1} \}$
 \STATE Play \texttt{UCRL-VTR-LIN} only restricted to $\mathcal{D}^{(i)}$ co-ordinates for $k_i$ episodes with norm upper bound $b$ and confidence level $\delta_i$
 \STATE Play {{\ttfamily UCRL-VTR-LIN}} in full $d$ dimension for $6^i\lceil \sqrt{k_0} \rceil$ episodes starting from where we left of in epoch $i-1$ with norm upper bound $b$ and confidence level $\delta$ 
 \STATE Compute an estimate of $\theta^*$ as $\widehat{\theta}^{(i+1)}=\widehat{\theta}_{\tau_i}$ using \eqref{eq:theta-est}, where $\tau_i= \sum_{j=0}^{i}6^j\lceil \sqrt{k_0} \rceil$
    \ENDFOR
  \end{algorithmic}
  \label{algo:dim}
\end{algorithm}


\paragraph{The Base Algorithm:} We take the algorithm of \cite{jia2020model} as our base algorithm, which is an adaptation of \texttt{UCRL-VTR} for linear kernel MDPs (henceforth, denoted as \texttt{UCRL-VTR-LIN}).
\texttt{UCRL-VTR-LIN} takes the upper bound $b$ of $\eucnorm{\theta^*}$ and a confidence level $\delta \in (0,1]$ as its input.
Now, let us have a look at how \texttt{UCRL-VTR-LIN} constructs the confidence ellipsoid at the $k$-th episode. 
Observe that at step $h$ of episode $k$, we have $(P^*V_{h+1}^k)(s_h^k,a_h^k)=\inprod{\phi_{V_{h+1}^k}(s_h^k,a_h^k)}{\theta^*}$,
where $V_{h+1}^k$ is a an estimate of the value function constructed using all the data received before episode $k$.
An estimate $\widehat \theta_k$ of $\theta^*$ is then computed by solving the following optimization problem
\begin{align}
   \!\!\min_{\theta \in \R^d}\! \sum\nolimits_{j=1}^{k}\!\sum\nolimits_{h=1}^{H}\!\!\left(\!V_{h+1}^j(s_{h+1}^j\!)\!-\!\inprod{\phi_{V_{h+1}^j}\!\!\!(s_h^j,a_h^j)}{\!\theta}  \!\right)^2\! \!\!+\! \eucnorm{\theta}^2.
    \label{eq:theta-est}
\end{align}
The confidence ellipsoid is then constructed as $\cB_k = \left\lbrace \theta \in \R^d : \norms{\Sigma_k^{1/2}(\theta-\widehat\theta_k)}^2 \le \beta_k(\delta)\right\rbrace$, where $\Sigma_{k} = I + \sum_{j=1}^{k}\sum_{h=1}^{H}  \phi_{V_{h+1}^j}(s^j_h,a^j_h) \phi_{V_{h+1}^j} (s^j_h,a^j_h)^\top$ and $\beta_{k}(\delta) = O\left((b^2 + H^2d\log(kH)\log^2(k^2H/\delta)\right)$. Hence, the $Q$-function estimates (see~\eqref{eq:bellman-rec}) of \texttt{UCRL-VTR-LIN} at the $k$-th episode take the form
\begin{align*}
    Q_h^k(s,a) = r(s,a) + \inprod{\phi_{V_{h+1}^k}(s,a)}{\widehat\theta_{k-1}} + \sqrt{\beta_{k-1}(\delta)}\norms{\Sigma_{k-1}^{-1/2}\phi_{V_{h+1}^k}(s,a)}.
\end{align*}
Using these, \texttt{UCRL-VTR-LIN} defines the value estimates as $V_h^k(s) = \min\lbrace\max_{a \in \cA} Q_h^k(s,a),H\rbrace$ to keep those bounded. Then, \citet{jia2020model} show that $\theta^*$ lies in the confidence ellipsoid $\cB_{k}$ in all episodes $k$ with probability at least $1-\delta$.

\paragraph{Our approach:} The proposed algorithm works over multiple epochs, and we use diminishing thresholds to estimate the support of $\theta^*$. The algorithm is parameterized by the initial phase length $k_0$, and the confidence level $\delta \in (0,1]$. {\ttfamily ARL-LIN(dim)} proceeds in epochs numbered $0,1,\ldots$, increasing with time. 
Each epoch $i$ is divided into two phases - {\em (i)} a regret minimization phase lasting $36^i k_0$ episodes, {\em (ii)} followed by a support estimation phase lasting $6^i \lceil\sqrt{k_0}\rceil$ episodes. Thus, each epoch $i$ lasts for a total of $36^i k_0 + 6^i \lceil \sqrt{k_0} \rceil$ episodes. 
At the beginning of epoch $i \geq 0$, $\mathcal{D}^{(i)} \subseteq [d]$ denotes the set of `active coordinates' -- the estimate of non-zero coordinates of $\theta^*$.



In the regret minimization phase of epoch $i$, a fresh instance of {{\ttfamily UCRL-VTR-LIN}} is spawned, with the dimensions restricted only to the set $\mathcal{D}^{(i)}$ and confidence level $\delta_i:= \frac{\delta}{2^i}$.
On the other hand, in the support estimation phase, we continue running the {{\ttfamily UCRL-VTR-LIN}} algorithm in full $d$ dimension, from the point where we left of in epoch $i-1$. Concretely, one should think of the support estimation phases over epochs as a single run of {{\ttfamily UCRL-VTR-LIN}} in the full $d$ dimension with confidence level $\delta$ and norm upper bound $b$. At the end of each epoch $i\geq 0$, let $\tau_i:= \sum_{j=0}^{i}6^j\lceil \sqrt{k_0} \rceil$ denote the total number episodes run in the support estimation phases. Then, {\ttfamily ARL-LIN(dim)} forms an estimate of $\theta^*$ as $\widehat{\theta}^{(i+1)}:=\widehat{\theta}_{\tau_i}$, where, for any $k \ge 1$, $\widehat\theta_k$ is as defined in \eqref{eq:theta-est}.
The active coordinate set $\mathcal{D}^{(i+1)}$ for the next epoch is then the coordinates of $\widehat{\theta}^{(i+1)}$ with  magnitude exceeding $(0.5)^{i+1}$. By this careful choice of exploration periods and thresholds, we show that the estimated support of $\theta^*$ is equal to the true support, for all but finitely many epochs. Thus, after a finite number of epochs, the true support of $\theta^*$ \emph{locks-in}, and thereafter the agent incurs the regret that an oracle knowing the true support would incur. Hence, the extra regret we incur with respect to an the oracle (which knows the support of $\theta^*$) is small.

Now, we informally state the regret bound of \texttt{ARL-LIN(dim)}. The formal statement is deferred to Appendix~\ref{app:dim}, Theorem~\ref{thm:dim}. We let $d^*=\|\theta^*\|_0$ to denote the sparsity of $\theta^*$.

\begin{proposition}[Informal regret bound of \texttt{ARL-LIN(dim)}]
Under a technical assumption on the feature mapping $\phi$, if \texttt{ARL-LIN(dim)} is run for $K$ episodes with suitably chosen initial phase length $k_0$, its regret satisfies (with high probability) the following:
\begin{align*}
    R(T) \!=\! \Tilde{\mathcal{O}}\!\left(\! \frac{Hk_0}{\gamma^{5.18}}    \!+\! \!\left(\!bd^*\!\sqrt{\!H^3T} \!+\! b\,dH^2K^{1/4}\!\right)\polylog \,T\!\!\right),
\end{align*}
where $\gamma =  \min\{|\theta^*(j)|: \theta^*(j) \neq 0\} $ with $\theta^*(j)$ denoting the $j$-th coordinate of $\theta^*$.
\end{proposition}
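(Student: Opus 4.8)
\emph{Proof plan.} The plan is to split the $K$ episodes into a \emph{burn-in} stretch — the epochs before the support of $\theta^*$ has been correctly identified — and a \emph{post-lock-in} stretch, show that the burn-in contributes only the first ($K$-free, problem-dependent) term, and that after lock-in the algorithm behaves exactly like an oracle that knows $\mathrm{supp}(\theta^*)$. First I would control the support-recovery estimator. The support-estimation phases, concatenated over epochs, form a single run of \texttt{UCRL-VTR-LIN} in the full $d$ dimension, so the confidence ellipsoid of \citet{jia2020model} gives, on an event of probability $\ge 1-\delta$, $\norms{\Sigma_{\tau_i}^{1/2}(\widehat\theta^{(i+1)}-\theta^*)}^2 \le \beta_{\tau_i}(\delta) = O\!\big(b^2 + H^2 d\,\polylog(\tau_i H/\delta)\big)$ for all epochs $i$, where $\tau_i = \sum_{j=0}^{i}6^j\lceil\sqrt{k_0}\rceil = \Theta(6^i\sqrt{k_0})$. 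The technical assumption on $\phi$ — a lower-eigenvalue/diversity condition on the feature covariances induced by the value estimates — together with a matrix martingale concentration inequality, yields $\gamma_{\min}(\Sigma_{\tau_i}) \ge c_0\,\tau_i H$ for a problem-dependent constant $c_0>0$ on an event of probability $\ge 1-\delta$. Combining, $\eucnorm{\widehat\theta^{(i+1)}-\theta^*} \le \varepsilon_i := \sqrt{\beta_{\tau_i}(\delta)/(c_0\tau_i H)}$, which decays like $6^{-i/2}k_0^{-1/4}$ up to polylog factors and the $b,H,d,c_0$ dependence.

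Next I would establish lock-in. Since $\mathcal{D}^{(i)} = \{j: |\widehat\theta^{(i)}(j)|\ge(0.5)^i\}$ with $\widehat\theta^{(i)} = \widehat\theta_{\tau_{i-1}}$, the triangle inequality shows that whenever $\varepsilon_{i-1} < \min\{(0.5)^i,\, \gamma-(0.5)^i\}$ no zero coordinate of $\theta^*$ survives and every coordinate with $|\theta^*(j)|\ge\gamma$ survives, hence $\mathcal{D}^{(i)} = \mathrm{supp}(\theta^*)$ and $|\mathcal{D}^{(i)}| = d^*$. Once $i\ge\log_2(2/\gamma)$ we have $(0.5)^i\le\gamma/2\le\gamma-(0.5)^i$, so the binding condition is $\varepsilon_{i-1}<2^{-i}$; since $\varepsilon_{i-1} = O(6^{-i/2}k_0^{-1/4}\,\polylog)$ and $2^i/6^{i/2}=(2/\sqrt6)^i$ is decreasing in $i$, this holds for every $i\ge1$ provided $k_0$ exceeds a problem-dependent constant — this is exactly the "suitably chosen $k_0$". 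Therefore $\mathcal{D}^{(i)} = \mathrm{supp}(\theta^*)$ for all $i\ge i^* := \lceil\log_2(2/\gamma)\rceil$, and the crucial quantitative upshot is $36^{i^*}\le(4/\gamma)^{\log_2 36} = 36^2\,\gamma^{-\log_2 36} = O(\gamma^{-5.18})$, since $\log_2 36 < 5.18$. Because the episodes before epoch $i^*$ number $\sum_{j=0}^{i^*-1}(36^j k_0 + 6^j\lceil\sqrt{k_0}\rceil) = O(k_0\,36^{i^*})$ and each episode costs at most $H$ in regret, the burn-in regret is $O(Hk_0\,36^{i^*}) = \widetilde{\mathcal{O}}(Hk_0/\gamma^{5.18})$.

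It remains to bound the post-lock-in regret. For every epoch $i\ge i^*$ the regret-minimization phase runs \texttt{UCRL-VTR-LIN} on exactly the $d^*$ true coordinates — hence well specified — for $k_i=36^i k_0$ episodes, incurring regret $\widetilde{\mathcal{O}}(bd^*\sqrt{H^3 k_i H}) = \widetilde{\mathcal{O}}(bd^* H^2 6^i\sqrt{k_0})$ by \citet{jia2020model}; summing this geometric series up to the final epoch $I$, for which $36^I k_0=\Theta(K)$ and hence $6^I\sqrt{k_0}=\Theta(\sqrt K)$, gives $\widetilde{\mathcal{O}}(bd^*H^2\sqrt K) = \widetilde{\mathcal{O}}(bd^*\sqrt{H^3 T})$. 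Separately, all support-estimation phases together constitute one run of \texttt{UCRL-VTR-LIN} in $d$ dimensions of length $\tau_I = \Theta(\sqrt K)$ episodes, contributing $\widetilde{\mathcal{O}}(bd\sqrt{H^3\tau_I H}) = \widetilde{\mathcal{O}}(bdH^2 K^{1/4})$. Adding the burn-in, the regret-minimization, and the support-estimation contributions, and taking a union bound over the relevant confidence events ($\delta$ for the full-dimension run, $\sum_i\delta_i\le 2\delta$ for the restricted runs, $\delta$ for the eigenvalue concentration), yields the stated bound with high probability.

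\emph{Main obstacle.} The delicate part is the support recovery in the first two steps. Unlike the i.i.d.\ linear-contextual-bandit setting of \citet{ghosh2021problem}, the regression "contexts" $\phi_{V^k_{h+1}}(s^k_h,a^k_h)$ here depend on the unknown kernel, the visited trajectory and the data-dependent value estimates, so lower-bounding $\gamma_{\min}(\Sigma_{\tau_i})$ needs the technical assumption on $\phi$ together with a martingale matrix concentration bound; moreover this estimation rate must be carefully matched with the thresholds $(0.5)^i$ and the exploration schedule ($6^i\lceil\sqrt{k_0}\rceil$ support-estimation episodes against $36^i k_0$ regret-minimization episodes) so that lock-in occurs by epoch $O(\log(1/\gamma))$ and the induced burn-in exponent is precisely $\log_2 36$. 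Aligning the threshold geometry with the estimation-rate geometry, while tracking the polylog and $b,H,d,c_0$ factors that determine how large $k_0$ must be, is the crux of the argument.
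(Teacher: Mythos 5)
Your proposal is correct and follows essentially the same route as the paper: a sup-norm estimation rate for $\widehat{\theta}^{(i)}$ obtained from the \texttt{UCRL-VTR-LIN} confidence ellipsoid combined with a matrix-martingale lower bound on $\gamma_{\min}(\Sigma_{\tau_i})$ under the eigenvalue assumption, a lock-in argument matching the $(0.5)^i$ thresholds against that rate so the support is recovered by epoch $O(\log(1/\gamma))$ (whence the $36^{i(\gamma)}=O(\gamma^{-5.18})$ burn-in term via $36\le 2^{5.18}$), and the same three-way regret decomposition into burn-in, restricted $d^*$-dimensional regret-minimization phases summing to $\widetilde{\mathcal{O}}(bd^*\sqrt{H^3T})$, and the concatenated full-$d$ support-estimation run of length $\Theta(\sqrt{K})$ contributing $\widetilde{\mathcal{O}}(bdH^2K^{1/4})$. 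The only differences are cosmetic bookkeeping (e.g., your geometric comparison $(2/\sqrt{6})^i$ versus the paper's requirement $\tau_{i-1}\ge i\,4^i\lceil\sqrt{k_0}\rceil$ for $i\ge 10$, and a slightly different union-bound tally).
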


Note that the cost of model selection is $\Tilde{\mathcal{O}}\left(\frac{H}{\gamma^{5.18}} k_0 + b\,dH^2K^{1/4}\right)$, which has a weaker dependence on the number of episodes $K$ than the leading term, and hence has a minor influence on the performance of the algorithm in the long run.


\begin{remark}[Dependence on $\gamma$] The regret bound depends on $\gamma$ -- the minimum absolute non-zero entry of $\theta^*$, and hence is instance-dependent. At this point, it is worth mentioning that we haven't optimized over the choice of epoch lengths and dependence on $\gamma$. In particular, choosing the support estimation period as $2^i \lceil \sqrt{k_0} \rceil$, the regret minimization period as $4^i k_0$ and the threshold as $(0.9)^i$, one can ensure the support locks-in after only $2$ epochs. However, the dependence on $\gamma$ in this case becomes worse. Therefore, the support estimation period, regret minimization period and threshold selection can be kept as tuning parameters.
\end{remark}

\paragraph{Norm as complexity measure:}
In Appendix~\ref{app:norm}, we take the norm, $\norms{\theta^*}$, of the unknown parameter $\theta^*$ as a measure of complexity of linear kernel MDPs. We develop an algorithm
\texttt{ARL-LIN(norm)} which adapts to $\norms{\theta^*}$ (see Algorithm~\ref{algo:norm}). This is in sharp contrast to non-adaptive algorithms (e.g. \texttt{UCRL-VTR-LIN} of \citet{jia2020model}) that requires an upper bound on the norm of $\theta^*$. In this setting also, we show that model selection is (order-wise) free. Please refer to Theorem~\ref{thm:norm} for the exact regret bound of \texttt{ARL-LIN(norm)}.






\newpage
\bibliography{model_rl,2018library,Bandit_RL_bib}

 \newpage
 \appendix
 
\begin{center}
{\huge Appendix}
\end{center}

\section{DETAILS FOR SECTION \ref{sec:gen}}
\label{app:gen}
\subsection{Confidence Sets in \texttt{UCRL-VTR}}
We first describe how the confidence sets are constructed in \texttt{UCRL-VTR}. Note that the procedure is similar to that done in \cite{ayoub2020model}, but with a slight difference. Specifically, we define the confidence width as a function of complexity of the transition family $\cP$ on which $P^*$ lies; rather than the complexity of a value-dependent function class induced by $\cP$ (as done in \cite{ayoub2020model}). We emphasize that this small change makes the model selection procedure easier to understand without any effect on the regret.

Let us define, for any two transition kernels $P,P' \in \cP$ and any episode $k \ge 1$, the following
\begin{align*}
    \cL_k(P) &= \sum_{j=1}^{k}\sum_{h=1}^{H} \left(V_{h+1}^j(s_{h+1}^j)-(PV_{h+1}^j)(s_h^j,a_h^j) \right)^2,\\ \cL_k(P,P') &= \sum_{j=1}^{k}\sum_{h=1}^{H} \left((PV_{h+1}^j)(s_h^j,a_h^j)-( P'V_{h+1}^j)(s_h^j,a_h^j) \right)^2.
\end{align*}
Then, the confidence set at the end of episode $k$ is constructed as 
\begin{align*}
    \cB_k = \left\lbrace P \in \cP \,\big |\, \cL_k(P,\widehat P_k) \le \beta_{k}(\delta)\right \rbrace,
\end{align*}
where $\widehat P_k =\argmin_{P \in \cP}\cL_k(\cP)$ denotes an estimate of $P^*$ after $k$ episodes. The confidence width $\beta_k(\delta)\equiv \beta_{k}(\cP,\delta)$ is set as
\begin{align*}
    \beta_k(\delta):=
     \begin{cases}
     8H^2\log\left(\frac{|\cP|}{\delta}\right) & \text{if}\; \cP\; \text{is finite,}\\
     8H^2\log\left(\frac{2\cN\left(\cP,\frac{1}{k H},\norms{\cdot}_{\infty,1}\right)}{\delta}\right) +  4 H^2 \left(2+\sqrt{2 \log \left(\frac{4kH(kH+1)}{\delta} \right)} \right) & \text{if}\; \cP \; \text{is infinite.}
    \end{cases}
\end{align*}

\begin{lemma}[Concentration of $P^*$]\label{lem:conc-ucrlvtr}
For any $\delta \in (0,1]$, with probability at least $1-\delta$, uniformly over all episodes $k \ge 1$, we have $P^* \in \cB_k$.
\end{lemma}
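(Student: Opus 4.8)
**Proof proposal for Lemma (Concentration of $P^*$, infinite case).**

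The plan is to show that the least-squares estimate $\widehat P_k$ stays close, in the empirical norm induced by the data, to the true kernel $P^*$, and then conclude $P^* \in \cB_k$ by the triangle inequality together with the fact that $\widehat P_k$ minimizes $\cL_k$. First I would fix a sequence of value functions $V_{h+1}^j$ (these are $\cG$-predictable by construction, since $V_{h+1}^k$ is built from data gathered strictly before episode $k$) and define, for each $P \in \cP$ and each pair $(j,h)$, the random variable $\varepsilon_{j,h} := V_{h+1}^j(s_{h+1}^j) - (P^* V_{h+1}^j)(s_h^j, a_h^j)$. The observation $\E[V_{h+1}^j(s_{h+1}^j)\mid \cG_{h-1}^j] = (P^* V_{h+1}^j)(s_h^j,a_h^j)$ stated in the excerpt shows $\{\varepsilon_{j,h}\}$ is a martingale difference sequence with respect to the filtration $\{\cG_{h-1}^j\}$, bounded in $[-H,H]$ since all value functions take values in $[0,H]$.

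Next I would expand $\cL_k(P^*) - \cL_k(P)$ for an arbitrary $P \in \cP$: a standard algebraic identity gives
\begin{align*}
\cL_k(P^*) - \cL_k(P) = - \cL_k(P, P^*) + 2\sum_{j=1}^{k}\sum_{h=1}^{H} \varepsilon_{j,h}\,\big((P V_{h+1}^j) - (P^* V_{h+1}^j)\big)(s_h^j, a_h^j).
\end{align*}
Since $\widehat P_k$ minimizes $\cL_k$, we have $\cL_k(P^*) - \cL_k(\widehat P_k) \ge 0$, which after rearranging yields $\cL_k(\widehat P_k, P^*) \le 2\sum_{j,h} \varepsilon_{j,h}\big((\widehat P_k V_{h+1}^j) - (P^* V_{h+1}^j)\big)(s_h^j,a_h^j)$. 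The right-hand side is a self-normalized sum; the standard way to handle it is a covering argument. I would take an $\alpha$-cover $\cP^\alpha$ of $\cP$ in $\norms{\cdot}_{\infty,1}$ with $\alpha = 1/(kH)$, of cardinality $\cN(\cP, 1/(kH), \norms{\cdot}_{\infty,1})$, so that for each $P$ there is $P' \in \cP^\alpha$ with $|(PV)(s,a) - (P'V)(s,a)| \le H\alpha = 1/k$ uniformly (using $\|V\|_\infty \le H$). For each fixed element of the cover, Azuma–Hoeffding (or a Freedman/self-normalized martingale bound) controls $\sum_{j,h}\varepsilon_{j,h} g(s_h^j,a_h^j)$ for the corresponding bounded predictable $g$; a union bound over the cover plus an $H$-bounded increments bound on the total sum over all $kH$ steps (accounting for the discretization error, which is where the extra $4H^2(2 + \sqrt{2\log(4kH(kH+1)/\delta)})$ term in $\beta_k(\delta)$ comes from) gives, with probability $\ge 1-\delta$ uniformly in $k$ (a further union bound over $k$ absorbed by the $1/(kH+1)$ type factors), a bound of the form $\cL_k(\widehat P_k, P^*) \le C H^2 \log(\cdots) + (\text{discretization term})$ — exactly the stated $\beta_k(\delta)$ up to constants. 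Finally, by the triangle inequality in the empirical seminorm, $\sqrt{\cL_k(P^*,\widehat P_k)} \le \sqrt{\beta_k(\delta)}$ means $P^*$ satisfies the defining inequality of $\cB_k$, so $P^* \in \cB_k$.

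The main obstacle is the self-normalized concentration step: making the covering argument rigorous requires care because the value functions $V_{h+1}^j$ are themselves data-dependent (so one cannot naively cover the induced function class once and for all), and the cross term couples $\widehat P_k$ — which depends on all the data — with the noise. The resolution is the one standard in this literature (Russo–Van Roy, Ayoub et al.): cover $\cP$ itself rather than the value-dependent function class $\{(s,a)\mapsto (PV)(s,a)\}$, note that the cover in $\norms{\cdot}_{\infty,1}$ transfers to a uniform cover of all induced functions since $\|V\|_\infty\le H$, and pay only a $1/k$ discretization price per step. Handling the union over episodes $k$ and keeping the failure probability at $\delta$ (rather than $\delta$ per episode) is a routine but bookkeeping-heavy step, done by choosing the $\log$ arguments to include the $kH(kH+1)$ factor as already reflected in $\beta_k(\delta)$. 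The remainder — the algebraic identity and the triangle inequality — is routine.
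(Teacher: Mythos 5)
Your overall route coincides with the paper's: your algebraic identity
$\cL_k(P^*)-\cL_k(P) = -\cL_k(P,P^*) + 2\sum_{j,h}\varepsilon_{j,h}\big((PV_{h+1}^j)-(P^*V_{h+1}^j)\big)(s_h^j,a_h^j)$
is exactly the paper's decomposition (their $Z_h^{j,P}$ is the negative of your cross term), the use of $\cL_k(\widehat P_k)\le \cL_k(P^*)$ is the same, and covering $\cP$ itself in $\norms{\cdot}_{\infty,1}$ at scale $\alpha=1/(kH)$ --- transferring to a uniform $\alpha H$ bound on $(PV)(s,a)$ and paying the discretization error reflected in the second term of $\beta_k(\delta)$ --- is precisely how the paper handles the infinite case.

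The one step that would fail as written is the concentration step. Plain Azuma--Hoeffding applied to $\sum_{j,h}\varepsilon_{j,h}\,g(s_h^j,a_h^j)$ for a fixed bounded predictable $g$ gives a deviation of order $H^2\sqrt{kH\log(1/\delta)}$, which grows like $\sqrt{k}$; feeding that into $\cL_k(\widehat P_k,P^*)\le 2\sum_{j,h}\varepsilon_{j,h}(\cdots)$ would force a confidence width of order $H^2\sqrt{kH\cdot\log\cN}$, not the stated, essentially $k$-free, $\beta_k(\delta)$. The paper instead exploits that the increment is conditionally sub-Gaussian with variance proxy $4H^2\big((PV_{h+1}^j)-(P^*V_{h+1}^j)\big)^2(s_h^j,a_h^j)$, so the total variance proxy is $4H^2\cL_k(P^*,P)$; the exponential-martingale bound with the fixed choice $\lambda=-1/(4H^2)$ then yields, for each $P$ in the cover and uniformly over $k$,
\begin{align*}
2\sum_{j=1}^{k}\sum_{h=1}^{H}\varepsilon_{j,h}\big((PV_{h+1}^j)-(P^*V_{h+1}^j)\big)(s_h^j,a_h^j) \;\le\; 4H^2\log(1/\delta) \;+\; \tfrac{1}{2}\,\cL_k(P^*,P),
\end{align*}
and the $\tfrac{1}{2}\cL_k(P^*,P)$ term is absorbed into the left-hand side, leaving only the $O(H^2\log(\cN/\delta))$ remainder. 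Your parenthetical ``(or a Freedman/self-normalized martingale bound)'' names the right fix, but it must be the primary tool rather than an aside: without the variance-adaptive form there is nothing to cancel the $\sqrt{k}$ growth, and the lemma as stated --- $P^*\in\cB_k$ for the algorithm's actual $\beta_k(\delta)$ --- does not follow. The rest of your argument (predictability of $V_{h+1}^j$, the union bounds over the cover and over $k$, the triangle-inequality conclusion) matches the paper.
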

\begin{proof}
First, we define, for any fixed $P \in \cP$ and $(h,k) \in [H] \times [K]$, the quantity
\begin{align*}
    Z_h^{k,P} := 2 \left(( P^* V_{h+1}^k)(s_h^k,a_h^k)-(P V_{h+1}^k)(s_h^k,a_h^k) \right)\left( V_{h+1}^k(s_{h+1}^k)-( P^* V_{h+1}^k)(s_h^k,a_h^k)\right).
\end{align*}
Then, we have
\begin{align}
    \cL_k(\widehat P_k) = \cL_k(P^*) +\cL_k(P^*,\widehat P_{k}) + \sum_{j=1}^{k}\sum_{h=1}^{H}Z_h^{j,\widehat P_k}~.
    \label{eq:break-up}
\end{align}
Using the notation $y_h^k=V_{h+1}^k(s_{h+1}^k)$, we can rewrite $Z_h^{k,P}$ as
\begin{align*}
    Z_h^{k,P} := 2 \left(( P^* V_{h+1}^k)(s_h^k,a_h^k)-(P V_{h+1}^k)(s_h^k,a_h^k) \right)\left( y_h^k-\E[y_h^k|\cG_{h-1}^k]\right),
\end{align*}
where $\cG_{h-1}^k$ denotes the $\sigma$-field summarising all the information available just before $s_{h+1}^k$ is observed.
Note that $Z_h^{k,P}$ is $\cG_h^k$-measurable,  Moreover, since $V_{h+1}^k \in [0,H]$, $Z_h^{k,P}$ is $2H |( P^* V_{h+1}^k)(s_h^k,a_h^k)-(P V_{h+1}^k)(s_h^k,a_h^k)|$-sub-Gaussian conditioned on $\cG_{h-1}^k$. Therefore, for any $\lambda < 0$, with probability at least $1-\delta$, we have
\begin{align*}
    \forall k \ge 1,\quad\sum_{j=1}^{k}\sum_{h=1}^{H} Z_h^{j,P} \ge \frac{1}{\lambda}\log(1/\delta)+ \frac{\lambda}{2}\cdot 4H^2 \sum_{j=1}^{k}\sum_{h=1}^{H}\left(( P^* V_{h+1}^j)(s_h^j,a_h^j)-(P V_{h+1}^j)(s_h^j,a_h^j) \right)^2~.
\end{align*}
Setting $\lambda = -1/(4H^2)$, we obtain for any fixed $P \in \cP$, the following:
\begin{align}
    \forall k \ge 1,\quad\sum_{j=1}^{k}\sum_{h=1}^{H} Z_h^{j,P} \ge -4H^2\log\left(\frac{1}{\delta}\right)- \frac{1}{2} \cL_k(P^*,P)~.
    \label{eq:subg-conc}
\end{align}
with probability at least $1-\delta$. We consider both the cases -- when $\cP$ is finite and when $\cP$ is infinite.

\paragraph{Case 1 -- finite $\cP$:}
We take a union bound over all $P \in \cP$ in \eqref{eq:subg-conc} to obtain that
\begin{align}
    \forall k \ge 1,\;\; \forall P \in \cP,\quad\sum_{j=1}^{k}\sum_{h=1}^{H} Z_h^{j,P} \ge -4H^2\log\left(\frac{|\cP|}{\delta}\right)- \frac{1}{2} \cL_k(P^*,P)
    \label{eq:finite}
\end{align}
with probability at least $1-\delta$. By construction, $\widehat P_k \in \cP$ and $\cL_k(\widehat P_k) \le \cL_k(P^*)$. Therefore, from \eqref{eq:break-up}, we have
\begin{align*}
   \forall k \ge 1,\;\;  \cL_k(P^*,\widehat P_k) \le 8H^2\log\left(\frac{|\cP|}{\delta}\right)
\end{align*}
with probability at least $1-\delta$, which proves the result for finite $\cP$.

\paragraph{Case 2 -- infinite $\cP$:}

Fix some $\alpha > 0$. Let $\cP^\alpha$ denotes an $(\alpha,\norms{\cdot}_{\infty,1})$ cover of $\cP$, i.e., for any $P \in \cP$, there exists an $P^\alpha$ in $\cP^\alpha$ such that $\norms{P^\alpha - P}_{\infty,1}:=\sup_{s,a}\int_{\cS}|P^\alpha(s'|s,a)-P(s'|s,a)|ds' \le \alpha$. Now, we take a union bound over all $P^\alpha \in \cP^\alpha$ in \eqref{eq:subg-conc} to obtain that
\begin{align*}
    \forall k \ge 1,\;\; \forall P^\alpha \in \cP^\alpha,\quad\sum_{j=1}^{k}\sum_{h=1}^{H} Z_h^{j,P^\alpha} \ge -4H^2\log\left(\frac{|\cP^\alpha|}{\delta}\right)- \frac{1}{2} \cL_k(P^*,P^\alpha)~.
\end{align*}
with probability at least $1-\delta$, and thus, in turn, 
\begin{align}
    \forall k \ge 1,\;\; \forall P\in \cP,\quad\sum_{j=1}^{k}\sum_{h=1}^{H} Z_h^{j,P} \ge -4H^2\log\left(\frac{|\cP^\alpha|}{\delta}\right)- \frac{1}{2} \cL_k(P^*,P)+\zeta_k^{\alpha}(P).
    \label{eq:comb-one-conf}
\end{align}
with probability at least $1-\delta$,
where $\zeta_k^{\alpha}(P)$ denotes the discretization error:
\begin{align*}
    &\zeta_k^{\alpha}(P)\\ =& \sum_{j=1}^{k}\sum_{h=1}^{H} \left(Z_h^{j,P}-  Z_h^{j,P^\alpha}\right) + \frac{1}{2}\cL_k(P^*,P) - \frac{1}{2}\cL_k(P^*,P^\alpha)\\
    =& \sum_{j=1}^{k}\sum_{h=1}^{H} \left(2y_h^j \left(( P^\alpha V_{h+1}^j)(s_h^j,a_h^j)-(P V_{h+1}^j)(s_h^j,a_h^j) \right) + \frac{1}{2}(P V_{h+1}^j)^2(s_h^j,a_h^j) - \frac{1}{2}(P^\alpha V_{h+1}^j)^2(s_h^j,a_h^j)\right).
\end{align*}
Since $\norms{P-P^\alpha}_{\infty,1} \le \alpha$ and $\norms{V_{h+1}^k}_\infty \le H$, we have 
\begin{align*}
 \left |(P^\alpha V_{h+1}^k)(s,a)-(PV_{h+1}^k)(s,a)\right| \le \alpha H~,   
\end{align*}
which further yields
\begin{align*}
    \left|(P^\alpha V_{h+1}^k)^2(s,a)- (P V_{h+1}^k)^2(s,a)\right| &\le \max_{|\xi| \le \alpha H} \left | \left((PV_{h+1}^k)(s,a)+\xi\right)^2-(PV_{h+1}^k)(s,a)^2 \right|\\ &\le 2 \alpha H^2 +\alpha^2H^2~.
\end{align*}
Therefore, we can upper bound the discretization error as
\begin{align*}
    |\zeta_k^{\alpha}(P)| &\le 2 \alpha H \sum_{j=1}^{k}\sum_{h=1}^{H}|y_h^j| + \sum_{j=1}^{k}\sum_{h=1}^{H} \left(\alpha H^2 + \frac{\alpha^2H^2}{2}\right)\\ &\le 2 \alpha H \sum_{j=1}^{k}\sum_{h=1}^{H}|y_h^j-\E[y_h^j|\cG_{h-1}^j]| + \sum_{j=1}^{k}\sum_{h=1}^{H} \left(3\alpha H^2 + \frac{\alpha^2H^2}{2}\right)~.
\end{align*}
Since $y_h^k-\E[y_h^k|\cG_{h-1}^k]$ is $H$-sub-Gaussian conditioned on $\cG_{h-1}^k$, we have 
\begin{align*}
    \forall k \ge 1,\forall h \in [H],\quad |y_h^k-\E[y_h^k|\cG_{h-1}^k]| \le H\sqrt{2 \log \left(\frac{2kH(kH+1)}{\delta} \right)}
\end{align*}
with probability at least $1-\delta$.
Therefore, with probability at least $1-\delta$, the discretization error is bounded for all episodes $k \ge 1$ as
\begin{align*}
    |\zeta_k^{\alpha}(P)| &\le kH \left( 2\alpha H^2 \sqrt{2 \log \left(\frac{2kH(kH+1)}{\delta} \right)} + 3\alpha H^2+\frac{\alpha^2H^2}{2}\right)\\
    & \le \alpha kH \left( 2 H^2 \sqrt{2 \log \left(\frac{2kH(kH+1)}{\delta} \right)} + 4H^2\right),
\end{align*}
where the last step holds for any $\alpha \leq 1$. Therefore, from \eqref{eq:comb-one-conf}, we have
\begin{align}
   \forall k \ge 1,\;\; \forall P\in \cP,\quad\sum_{j=1}^{k}\sum_{h=1}^{H} Z_h^{j,P} &\ge -4H^2\log\left(\frac{|\cP^\alpha|}{\delta}\right)- \frac{1}{2} \cL_k(P^*,P)\nonumber\\ &  \quad \quad- \alpha kH \left( 2 H^2 \sqrt{2 \log \left(\frac{2kH(kH+1)}{\delta_i} \right)} + 4H^2\right),
   \label{eq:infinite}
\end{align}
with probability at least $1-2\delta$.
Now, setting $\alpha=\frac{1}{k H}$, we obtain, from \eqref{eq:break-up}, that
\begin{align*}
    \forall k \ge 1,\; \cL_k(P^*,\hat P_{k}) \le 8H^2\log\left(\frac{2\cN\left(\cP,\frac{1}{k H},\norms{\cdot}_{\infty,1}\right)}{\delta}\right) +  4 H^2 \left(2+\sqrt{2 \log \left(\frac{4kH(kH+1)}{\delta} \right)} \right)
\end{align*}
with probability at least $1-\delta$, which proves the result for infinite $\cP$.
\end{proof}

\subsection{Model Selection in \texttt{ARL-GEN}}

First, we find concentration bounds on the test statistics $T_m^{(i)}$ for all epochs $i \ge 1$ and class indexes $m \in [M]$, which are crucial to prove the model selection guarantee (Lemma \ref{lem:gen_infinite}) of \texttt{ARL-GEN}.

\paragraph{1. Realizable model classes:} Fix a class index $m \ge m^*$. In this case, the true model $P^* \in \cP_m$. Therefore, the we can upper bound the empirical at epoch $i$ as
\begin{align*}
    T_m^{(i)} \le \frac{1}{\tau_{i-1}H}\sum_{k=1}^{\tau_{i-1}}\sum_{h=1}^{H} \left(V_{h+1}^k(s_{h+1}^k)-( P^* V_{h+1}^k)(s_h^k,a_h^k) \right)^2 = \frac{1}{\tau_{i-1}H}\sum_{k=1}^{\tau_{i-1}}\sum_{h=1}^{H} \left( y_h^k - \E[y_h^k|\cG_{h-1}^k]\right)^2.
\end{align*}
Now, we define the random variable $m_h^k := \left( y_h^k - \E[y_h^k|\cG_{h-1}^k]\right)^2$. We use the notation $\E[m_h^k|\cG_{h-1}^k] = \Var[y_h^k|\cG_{h-1}^k]= \sigma^2$. Moreover, note that $(m_h^k-\E[m_h^k|\cG_{h-1}^k])_{k,h}$ is a martingale difference sequence adapted to the filtration $\cG_h^k$, with absolute values $|m_h^k-\E[m_h^k|\cG_{h-1}^k]| \le H^2$ for all $k,h$. Therefore, by the  Azuma-Hoeffding inequality, with probability at least $1-\delta/2^{i}$,
\begin{align*}
    \sum_{k=1}^{\tau_{i-1}}\sum_{h=1}^H m_h^k \le  \sum_{k=1}^{\tau_{i-1}}\sum_{h=1}^H \E[m_h^k|\cG_{h-1}^k] + H^2\sqrt{2\tau_{i-1}H\log(2^{i}/\delta)}.
\end{align*}
Now, using a union bound, along with the definition of $T^{(i)}_m$, with probability at least $1-\delta$, for any class index $m \ge m^*$, we have
\begin{align}
\forall i \ge 1,\quad  T_m^{(i)} \le \sigma^2  + H^{3/2}\sqrt{\frac{2\log(2^{i}/\delta)}{\tau_{i-1}}} 
  \label{eq:realizable}  
\end{align}

Let us now look at the definition of $T^{(i)}_m$. We have
\begin{align*}
     T_m^{(i)} = \frac{1}{\tau_{i-1}H}\sum_{k=1}^{\tau_{i-1}}\sum_{h=1}^{H} \left(V_{h+1}^k(s_{h+1}^k)-( \widehat{P}^{(i)}_m V_{h+1}^k)(s_h^k,a_h^k) \right)^2.
\end{align*}
We write
\begin{align*}
    & \frac{1}{\tau_{i-1}H}\sum_{k=1}^{\tau_{i-1}}\sum_{h=1}^{H} \left(V_{h+1}^k(s_{h+1}^k)-( \widehat{P}^{(i)}_m V_{h+1}^k)(s_h^k,a_h^k) \right)^2 - \frac{1}{\tau_{i-1}H}\sum_{k=1}^{\tau_{i-1}}\sum_{h=1}^{H} \left(V_{h+1}^k(s_{h+1}^k)-( \widehat{P}^* V_{h+1}^k)(s_h^k,a_h^k) \right)^2 \\
    & = \frac{1}{\tau_{i-1}H}\sum_{k=1}^{\tau_{i-1}}\sum_{h=1}^{H} \left[ (\widehat{P}^* V_{h+1}^k(s_h^k,a_h^k) - \widehat{P}^{(i)}_m V_{h+1}^k)(s_h^k,a_h^k))(2V_{h+1}^k(s_{h+1}^k) - (\widehat{P}^{(i)}_m V_{h+1}^k)(s_h^k,a_h^k) - (\widehat{P}^* V_{h+1}^k)(s_h^k,a_h^k) \right]
\end{align*}
Taking expectations (w.r.t the true model $P^*$), we obtain
\begin{align*}
    \mathbb{E}T^{(i)}_m \geq \sigma^2,
\end{align*}
and hence, using similar martingale difference construction, we obtain
\begin{align*}
    T^{(i)}_m \geq \sigma^2 - H^{3/2}\sqrt{\frac{2\log(2^{i}/\delta)}{\tau_{i-1}}}
\end{align*}
with probability at least $1-\delta$.

\paragraph{2. Non-realizable model classes:} Fix a class index $m < m^*$. In this case, the true model $P^* \notin \cP_m$. We can decompose the empirical risk at epoch $i$ as $T_m^{(i)}=T^{(i)}_{m,1}+T^{(i)}_{m,2}+T^{(i)}_{m,3}$, where
\begin{align*}
    T^{(i)}_{m,1}  &= \frac{1}{\tau_{i-1}H}\sum_{k=1}^{\tau_{i-1}}\sum_{h=1}^{H}  \left(V_{h+1}^k(s_{h+1}^k)-( P^* V_{h+1}^k)(s_h^k,a_h^k) \right)^2,\\
    T^{(i)}_{m,2} & = \frac{1}{\tau_{i-1}H}\sum_{k=1}^{\tau_{i-1}}\sum_{h=1}^{H} \left(( P^* V_{h+1}^k)(s_h^k,a_h^k)-( \hat P^{(i)}_m V_{h+1}^k)(s_h^k,a_h^k) \right)^2,\\
    T^{(i)}_{m,3} &= \frac{1}{\tau_{i-1}H}\sum_{k=1}^{\tau_{i-1}}\sum_{h=1}^{H} 2 \left(( P^* V_{h+1}^k)(s_h^k,a_h^k)-( \hat P^{(i)}_m V_{h+1}^k)(s_h^k,a_h^k) \right)\left( V_{h+1}^k(s_{h+1}^k) - (P^*V_{h+1}^k)(s_h^k,a_h^k)\right).
\end{align*}
First, using a similar argument as in \eqref{eq:realizable}, with probability at least $1-\delta$, we obtain
\begin{align*}
    \forall i \ge 1, \quad T^{(i)}_{m,1} \ge \sigma^2 - H^{3/2}\sqrt{\frac{2\log(2^{i}/\delta)}{\tau_{i-1}}}.
\end{align*}
Next, by Assumption \ref{ass:sep}, we have \begin{align*}
\forall i \ge 1, \quad    T^{(i)}_{m,2} \ge \Delta~.
\end{align*}
Now, we turn to bound the term $T^{(i)}_{m,3}$. We consider both the cases -- when $\cP$ is finite and when $\cP$ is infinite.

\paragraph{Case 1 -- finite model classes:}
Note that $\hat P_m^{(i)} \in \cP_m$. Then, from \eqref{eq:finite}, we have
\begin{align*}
     \forall i \ge 1,\quad T_{m,3}^{(i)} \ge 
    -\frac{4H}{\tau_{i-1}} \log\left(\frac{|\cP_m|}{\delta}\right) -\frac{1}{2} T_{m,2}^{(i)}
\end{align*}
with probability at least $1-\delta$. Now, combining all the three terms together and using a union bound, we obtain the following for any class index $m \leq m^*-1$:
\begin{align}
   \forall i \ge 1,\quad T_m^{(i)} \ge \sigma^2 + \frac{1}{2}\Delta - H^{3/2}\sqrt{\frac{2\log(2^{i}/\delta)}{\tau_{i-1}}}  -\frac{4H}{\tau_{i-1}} \log\left(\frac{|\cP_m|}{\delta}\right).
   \label{eq:finite-non-realizable}
 \end{align}
with probability at least $1-2\delta$.

\paragraph{Case 2 -- infinite model classes:}
We follow a similar procedure, using \eqref{eq:infinite}, to obtain the following for any class index $m \leq m^*-1$:
\begin{align}
     \forall i \ge 1,\quad T_{m}^{(i)} &\ge \sigma^2 + \frac{1}{2}\Delta - H^{3/2}\sqrt{\frac{2\log(2^{i}/\delta)}{\tau_{i-1}}}
    -\frac{4H}{\tau_{i-1}} \log\left(\frac{\cN(\cP_m,\alpha,\norms{\cdot}_{\infty,1})}{\delta}\right)\nonumber\\ &\quad\quad- \alpha \left( 2 H^2 \sqrt{2 \log \left(\frac{2\tau_{i-1}H(\tau_{i-1}H+1)}{\delta} \right)} + 4H^2\right)
    \label{eq:infinite-non-realizable}
\end{align}
with probability at least $1-3\delta$. 

\subsubsection{Proof of Lemma \ref{lem:gen_infinite}}
We are now ready to prove Lemma \ref{lem:gen_infinite}, which presents the model selection Guarantee of \texttt{ARL-GEN} for infinite model classes $\lbrace \cP_m \rbrace_{m \in [M]}$. Here, at the same time, we prove a similar (and simpler) result for finite model classes also. 

First, note that we consider doubling epochs $k_i=2^i$, which implies $\tau_{i-1} = \sum_{j=1}^{i-1}k_j = \sum_{j=1}^{i-1} 2^j = 2^i -1$. With this, the number of epochs is given by $N=\lceil \log_2(K+1) -1 \rceil = \mathcal{O}(\log K)$. Let us now consider finite model classes $\lbrace \cP_m \rbrace_{m \in [M]}$.

\paragraph{Case 1 -- finite model classes:}

First, we combine \eqref{eq:realizable}
 and \eqref{eq:finite-non-realizable}, and take a union bound over all $m \in [M]$ to obtain
\begin{align*}
    &\forall m \ge m^*,\;\forall i \ge 1, \, \sigma^2  - H^{3/2}\sqrt{\frac{2N\log(2/\delta)}{\tau_{i-1}}} \leq T_m^{(i)} \le \sigma^2  + H^{3/2}\sqrt{\frac{2N\log(2/\delta)}{\tau_{i-1}}}\;\; \text{and}\\
    &\forall m \le m^*-1,\;\forall i \ge 1,\quad T_m^{(i)} \ge \sigma^2 + \frac{1}{2}\Delta - H^{3/2}\sqrt{\frac{2N\log(2/\delta)}{\tau_{i-1}}}  -\frac{4H}{\tau_{i-1}} \log\left(\frac{|\cP_M|}{\delta}\right) 
\end{align*}
with probability at least $1-2M\delta$, where we have used that $\log(2^i/\delta) \le N\log(2/\delta)$ for all $i$ and $|\cP_m| \le |\cP_M|$ for all $m$.
Now, suppose for some epoch $i^*$,  satisfies 
\begin{align*}
    2^{i^*} \geq C\max \left \lbrace \frac{2H^3 \log K}{\Delta^2} \log(2/\delta), 4H \log\left(\frac{|\mathcal{P}_M|}{\delta}\right) \right \rbrace.
\end{align*}
where $C$ is a sufficiently large universal constant. Then, we have
\begin{align*}
    &\forall m \ge m^*,\;\forall i \ge i^*,\quad \sigma^2 - \frac{c_0}{2^{i/2}} \leq T_m^{(i)} \le \sigma^2 + \frac{c_0}{2^{i/2}} \;\; \text{and}\\
    &\forall m \le m^*-1,\;\forall i \ge i^*,\quad T_m^{(i)} \ge \sigma^2 + \frac{1}{2} \Delta  - \frac{c_1}{2^{i/2}}
\end{align*}
with probability at least $1-3M\delta$.
Note that with the chosen threshold $\gamma_i$, for all $i \geq i^*$
\begin{align*}
  \sigma^2 - \frac{c_0}{2^{i/2}} + \frac{\sqrt{i}}{2^{i/2}} \leq \gamma_i &\leq \sigma^2 +\frac{c_0}{2^{i/2}} + \frac{\sqrt{i}}{2^{1/2}} \\\\
    &\leq\sigma^2 + \frac{1}{2}\Delta -\frac{c_1}{2^{i/2}}.
\end{align*}
where the last inequality comes from the choice of $2^{i^*}$. With this, we have
\begin{align*}
    \forall m \ge m^*,\;\forall i \ge i^*,\quad T_m^{(i)} &\le \gamma_i \; \text{and}\\
    \forall m \le m^*-1,\;\forall i \ge i^*,\quad T_m^{(i)} &\ge \gamma_i
\end{align*}
with probability at least $1-3M\delta$. The above equation implies that $m^{(i)}=m^*$ for all epochs $i \ge i^*$.

Now, we focus on infinite model classes $\lbrace \cP_m \rbrace_{m \in [M]}$, for which Lemma 1 is stated.

\paragraph{Case 2 -- infinite model classes:} 

First, we combine \eqref{eq:realizable}
 and \eqref{eq:infinite-non-realizable}, and take a union bound over all $m \in [M]$ to obtain
\begin{align*}
    & \forall m \ge m^*,\;\forall i \ge 1,\sigma^2  - H^{3/2}\sqrt{\frac{2N\log(2/\delta)}{\tau_{i-1}}} \leq  T_m^{(i)} \le \sigma^2  + H^{3/2}\sqrt{\frac{2N\log(2/\delta)}{\tau_{i-1}}}\;\; \text{and}\\
    & \forall m \le m^*-1,\;\forall i \ge 1,\quad T_m^{(i)} \ge \sigma^2 + \frac{1}{2}\Delta - H^{3/2}\sqrt{\frac{2N\log(2/\delta)}{\tau_{i-1}}}  -\frac{4H}{\tau_{i-1}} \log\left(\frac{\cN(\cP_M,\alpha,\norms{\cdot}_{\infty,1})}{\delta}\right)\nonumber\\&\quad\quad - \alpha \left( 2 H^2 \sqrt{2 \log \left(\frac{2K H(KH+1)}{\delta} \right)} + 4H^2\right)
\end{align*}
with probability at least $1-3M\delta$. Suppose for some epoch $i^*$, we have,
\begin{align*}
    2^{i^*} \geq C \max \left \lbrace \frac{H^3 \log K}{\Delta^2} \log(2/\delta), 4H \log\left(\frac{\cN(\cP_M,\alpha,\norms{\cdot}_{\infty,1})}{\delta}\right) \right \rbrace
\end{align*}
where $C>1$ is a sufficiently large universal constant. Then, with the choice of threshold $\gamma_i$, and doing the same calculation as above, we obtain
\begin{align*}
    \forall m \ge m^*,\;\forall i \ge i^*,\quad T_m^{(i)} &\le \gamma_i \; \text{and}\\
    \forall m \le m^*-1,\;\forall i \ge i^*,\quad T_m^{(i)} &\ge \gamma_i
\end{align*}
with probability at least $1-3M\delta$. The above equation implies that $m^{(i)}=m^*$ for all epochs $i \ge i^*$, proving the result.

\subsection{Regret Bound of \texttt{ARL-GEN} (Proof of Theorem \ref{thm:general})}
Lemma~\ref{lem:gen_infinite} implies that as soon as we reach epoch $i^*$, \texttt{ARL-GEN} identifies the model class with high probability, i.e., for each $i \ge i^*$, we have $m^{(i)}=m^*$. However, before that, we do not have any guarantee on the regret performance of \texttt{ARL-GEN}. Since at every episode the regret can be at most $H$, 
the cumulative regret up until the $i^*$ epoch is upper bounded by $\tau_{i^*-1}H$, 
which is at most $\mathcal{O} \left( \max \left \lbrace H^4 \log(K) \log(1/\delta), H^2 \log\left(\frac{\cN(\cP_M,\frac{1}{KH},\norms{\cdot}_{\infty,1})}{\delta}\right) \right \rbrace \right)$, if the model classes are infinite, and $\mathcal{O} \left( \max \left \lbrace H^4 \log(K) \log(1/\delta), H^2 \log\left(\frac{|\cP_M|}{\delta}\right) \right \rbrace \right)$, if the model classes are finite. Note that this is the cost we pay for model selection. 

Now, let us bound the regret of \texttt{ARL-GEN} from epoch $i^*$ onward. 
Let $R^{\texttt{UCRL-VTR}}(k_i,\delta_i,\cP_{m^{(i)}})$ denote the cumulative regret of \texttt{UCRL-VTR}, when it is run for $k_i$ episodes with confidence level $\delta_i$ for the family $\cP_{m^{(i)}}$. Now, using the result of \citet{ayoub2020model}, we have
\begin{align*}
    R^{\texttt{UCRL-VTR}}(k_i,\delta_i,\cP_m^{(i)}) \le 1+H^2\dimn_{\cE}\left(\cF_{m^{(i)}},\frac{1}{k_iH}\right) &+4\sqrt{\beta_{k_i}(\cP_{m^{(i)}},\delta_i)\dimn_{\cE}\left(\cF_{m^{(i)}},\frac{1}{k_iH}\right)k_i H}\\ &+H\sqrt{2k_iH\log(1/\delta_i)}
\end{align*}
with probability at least $1-2\delta_i$.
With this and Lemma \ref{lem:gen_infinite}, the regret of \texttt{ARL-GEN} after $K$ episodes (i.e., after $T=KH$ timesteps) is given by
\begin{align*}
    R(T) &\leq \tau_{i^*-1}H + \sum_{i=i^*}^N R^{\texttt{UCRL-VTR}}(k_i,\delta_i,\cP_{m^{(i)}}) \\
     & \leq \tau_{i^*-1}H + N+ \sum_{i=i^*}^N H^2\dimn_{\cE}\left(\cF_{m^*},\frac{1}{k_iH}\right) +4 \sum_{i=i^*}^N \sqrt{\beta_{k_i}(\cP_{m^*},\delta_i)\dimn_{\cE}\left(\cF_{m^*},\frac{1}{k_iH}\right)k_i H}\\
     &\quad\quad\quad\quad\quad+ \sum_{i=i^*}^N H\sqrt{2k_iH\log(1/\delta_i)}~.
\end{align*}
The above expression holds with probability at least $1-3M\delta-2\sum_{i=i^*}^{N}\delta_i$ for infinite model classes, and with probability at least $1-2M\delta-2\sum_{i=i^*}^{N}\delta_i$ for finite model classes. Let us now compute the last term in the above expression. Substituting $\delta_i = \delta/2^i$, we have
\begin{align*}
    \sum_{i=i^*}^N H\sqrt{2k_iH\log(1/\delta_i)} &= \sum_{i=i^*}^N H\sqrt{2k_i H \,i \,\log(2/\delta)} \\
     & \leq H\sqrt{2HN\log(2/\delta)} \sum_{i=1}^N \sqrt{k_i} \\
     &= \mathcal{O}\left(  H \sqrt{KH N \log(1/\delta)}\right) =  \mathcal{O}\left( H \sqrt{T\log K \log(1/\delta)}\right),
\end{align*}
where we have used that the total number of epochs $N=\cO(\log K)$, and that
\begin{align*}
    \sum_{i=1}^N \sqrt{k_i} & = \sqrt{k_N}\left(1 + \frac{1}{\sqrt{2}} + \frac{1}{2} + \ldots N\text{-th term} \right)\\
    & \leq \sqrt{k_N}\left(1 + \frac{1}{\sqrt{2}} + \frac{1}{2} + ... \right)= \frac{\sqrt{2}}{\sqrt{2} -1} \sqrt{k_N}  \leq \frac{\sqrt{2}}{\sqrt{2} -1} \sqrt{K}.
    \end{align*}
     Next, we can upper bound the third to last term in the regret expression as
  \begin{align*}
      \sum_{i=i^*}^N H^2\dimn_{\cE}\left(\cF_{m^*},\frac{1}{k_iH}\right) \le H^2 N \dimn_{\cE}\left(\cF_{m^*},\frac{1}{KH}\right) = \cO \left(H^2d_{\cE}^* \log K \right).
  \end{align*}
    Now, notice that, by substituting $\delta_i = \delta/2^i$, we can upper bound $\beta_{k_i}(\cP_{m^*},\delta_i)$ as follows:
  \begin{align*}
      \beta_{k_i}(\cP_{m^*},\delta_i) &=  \mathcal{O}\left( H^2 \, i \, \log \left( \frac{\cN(\mathcal{P}_{m^*},\frac{1}{k_i H},\norms{\cdot}_{\infty,1})}{\delta}\right) + H^2 \left(1+ \sqrt{i\log \frac{ k_iH(
    k_i H+1)}{\delta}} \right) \right)\\ & \leq \mathcal{O}\left( H^2 \, N \, \log \left( \frac{\cN(\mathcal{P}_{m^*},\frac{1}{K H},\norms{\cdot}_{\infty,1})}{\delta}\right) + H^2 \left(1+ \sqrt{N \log \frac{K H}{\delta}} \right) \right) \\
      & \leq \mathcal{O}\left( H^2 \, \log K \, \log \left( \frac{\cN(\mathcal{P}_{m^*},\frac{1}{K H},\norms{\cdot}_{\infty,1})}{\delta}\right) + H^2  \sqrt{ \log K \log (K H/\delta)}  \right)
  \end{align*}
  for infinite model classes, and $\beta_{k_i}(\cP_{m^*},\delta_i) = \cO\left(H^2\log K\log\left(\frac{|\cP_{m^*}|}{\delta} \right) \right)$ for finite model classes.
  With this, the second to last term in the regret expression can be upper bounded as 
  \begin{align*}
    & \sum_{i=i^*}^N 4\sqrt{\beta_{k_i}(\cP_{m^*},\delta_i)\dimn_{\cE}\left(\cF_{d_i},\frac{1}{k_iH}\right)k_i H}  \\
    & \leq \mathcal{O} \left(H\sqrt{  \log K \, \log \left( \frac{\cN(\mathcal{P}_{m^*},\frac{1}{K H},\norms{\cdot}_{\infty,1})}{\delta}\right) + \sqrt{ \log K \log (K H/\delta)}  } \,\,\sqrt{\dimn_{\cE}\left(\cF_{m^*},\frac{1}{K H}\right)} \sum_{i=i^*}^N \sqrt{k_i H} \right)\\
    & \leq \mathcal{O} \left(H\sqrt{ \log K \, \log \left( \frac{\cN(\mathcal{P}_{m^*},\frac{1}{K H},\norms{\cdot}_{\infty,1})}{\delta}\right) \log\left(\frac{KH}{\delta}\right)} \,\,\sqrt{KH\,\,\dimn_{\cE}\left(\cF_{m^*},\frac{1}{K H}\right)}\right)\\
    & = \cO \left(H\sqrt{Td_{\cE}^*(\mathbb{M}^*+\log(1/\delta))\log K \log(T/\delta)} \right)
  \end{align*}
  for infinite model classes. Similarly, for finite model classes, we can upper bound this term by $\cO \left(H\sqrt{Td_{\cE}^*\log\left(\frac{|\cP_{m^*}|}{\delta} \right)\log K} \right)$.
 
 Hence, for infinite model classes, the final regret bound can be written as
\begin{align*}
   R(T) &= \mathcal{O} \left( \max \left \lbrace H^4 \log(K) \log(1/\delta), H^2 \log\left(\frac{\cN(\cP_M,1/T,\norms{\cdot}_{\infty,1})}{\delta}\right) \right \rbrace \right)\\ &\quad + \cO \left(H^2d_{\cE}^* \log K+H\sqrt{Td_{\cE}^*(\mathbb{M}^*+\log(1/\delta))\log K \log(T/\delta)} \right)~.
\end{align*}
 The above regret bound holds with probability greater than
 \begin{align*}
      1-3M\delta -\sum_{i=i^*}^{N}\frac{\delta}{2^{i-1}} \ge 1-3M\delta -\sum_{i\ge 1}\frac{\delta}{2^{i-1}} =1-3M\delta-2\delta~,
 \end{align*}
 which completes the proof of Theorem \ref{thm:general}.
 
Similarly, for finite model classes, the final regret bound can be written as
\begin{align*}
    R(T) &= \mathcal{O} \left( \max \left \lbrace H^4 \log(K) \log(1/\delta), H^2 \log\left(\frac{|\cP_M|}{\delta}\right) \right \rbrace \right)\\ & \quad+  \cO \left(H^2d_{\cE}^* \log K+H\sqrt{Td_{\cE}^*\log\left(\frac{|\cP_{m^*}|}{\delta} \right)\log K} \right),
\end{align*}
which holds with probability greater than $1-2M\delta-2\delta$.

\section{DETAILS FOR SECTION \ref{sec:lin}}

In this setting, we do not need any separability condition like Assumption~\ref{ass:sep}. Instead, we make the following assumptions on the feature map $\phi$. To this end, for any function $V:\mathcal{S} \to \R$ and state-action pair $(s,a)$, we first define the function $\phi_V(s,a) := \int_{\cS}\phi(s,a,s')V(s')ds'$. We then introduce a function class $\cV$, which is the collection of all possible maps
\begin{align*}
     s \!\mapsto\! \min \!\left\lbrace\! H, \max_{a\in \cA} \!\Big( r(s,a) \!+\! \inprod{\psi(s,a)}{\!\theta} \!+\! \eta\! \norms{\psi(s,a)}_W \!\Big) \!\right\rbrace\!,
\end{align*}
where $\psi:\cS \times \cA \to \mathbb{B}_d^1$, $\theta \in \R^d$, $ W \in \mathbb{S}_{d}^+$ and $\eta > 0$ parameterize these maps. Note that the value functions computed by our algorithms will belong to this class.
\begin{assumption}
\label{asm:eigen}
For any bounded function $V:\mathcal{S} \to [0,H]$ and any state-action pair $(s,a) \in \mathcal{S}\times \mathcal{A}$, we have $\eucnorm{\phi_V(s,a)}\le 1$. Furthermore, there exists a $\Sigma \in \mathbb{S}_d^{+}$ and a
$\rho_{\min} \!>\! 0$ such that for all $V \!\in\! \cV$, $h \!\in\! [H]$, $k \!\in\! [K]$, the following holds almost surely:
\begin{align*}
\E \left[ \phi_{V}(s^k_h,a^k_h) \phi_{V} (s^k_h,a^k_h)^\top |\cG'_{k-1} \right] = \Sigma \succeq \rho_{\min}\, I\;,
\end{align*}
where $(s_h^k,a_h^k)$ is the state-action pair visited in step $h$ of episode $k$ and $\cG'_{k-1}$ denotes the $\sigma$-field summarizing the information available before the start of episode $k$.
\end{assumption}
We emphasize that similar assumptions have featured in model selection for stochastic contextual bandits \citep{osom,foster_model_selection,ghosh2021problem}. The assumption ensures that the confidence ball of the estimate of $\theta^*$ shrinks at a certain rate. We emphasize here that for model selection problems the assumption becomes crucial since we care about the rate of estimate shrinkage as well as the regret performance. Note that classical algorithms, like OFUL for bandits \citep{abbasi2011improved}, UCRL-VTR-LIN for RL \citep{ayoub2020model} only care about regret minimization and hence conditions similar to the above assumption are not needed. 
The above observation and similar eigenvalue assumption first featured in \cite{osom}, for model selection between linear and standard bandits. Later \citet{foster_model_selection,ghosh2021problem} used this for model selection in (linear) contextual bandits. Recently, \cite{ghosh2021collaborative} use this condition for careful model estimation and clustering for contextual bandit problems. In fact the necessity of an assumption like this is posed as an open problem in \citet{foster2020open}.

First, using Assumption~\ref{asm:eigen}, we obtain the following result, which is crucial to understand the rate at which the confidence ellipsoid of $\theta^*$ shrinks.
\begin{lemma}
\label{lem:eigenvalue}
Fix a $\delta \in (0,1]$, and suppose that Assumption~\ref{asm:eigen} holds.
Then, with probability at least $1-\delta$, uniformly over all $k \in \lbrace \tau_{\min}(\delta),\ldots,K \rbrace$, we have 
$\gamma_{\min} (\Sigma_{k})  \geq 1 +  \frac{\rho_{\min}kH}{2}$, where  $\tau_{\min}(\delta) := \left( \frac{16}{\rho_{\min}^2} + \frac{8}{3 \rho_{\min}} \right) \log\left(\frac{2dKH}{\delta}\right)$.
\end{lemma}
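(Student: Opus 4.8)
\textbf{Proof plan for Lemma~\ref{lem:eigenvalue}.}
The plan is to control the minimum eigenvalue of the empirical Gram matrix $\Sigma_k = I + \sum_{j=1}^{k}\sum_{h=1}^{H}\phi_{V_{h+1}^j}(s_h^j,a_h^j)\phi_{V_{h+1}^j}(s_h^j,a_h^j)^\top$ by comparing it to its predictable quadratic variation and applying a matrix concentration inequality for martingales. First I would introduce, for each episode $j$ and step $h$, the random matrix $X_h^j := \phi_{V_{h+1}^j}(s_h^j,a_h^j)\phi_{V_{h+1}^j}(s_h^j,a_h^j)^\top$, noting that by Assumption~\ref{asm:eigen} we have $\|\phi_{V_{h+1}^j}(s_h^j,a_h^j)\|\le 1$, so $0 \preceq X_h^j \preceq I$ and $\|X_h^j\|_{\mathrm{op}}\le 1$ almost surely. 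Since $V_{h+1}^j \in \cV$ (the value functions computed by the algorithm lie in this class by construction), Assumption~\ref{asm:eigen} also gives $\E[X_h^j \mid \cG'_{j-1}] = \Sigma \succeq \rho_{\min} I$ — here one should be slightly careful that the conditioning is on $\cG'_{j-1}$, the $\sigma$-field before episode $j$, which is exactly what the assumption provides, so $V_{h+1}^j$ is $\cG'_{j-1}$-measurable and the conditional expectation is well-defined and equals $\Sigma$ for every $h$.

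Next I would form the martingale difference sequence $D_h^j := X_h^j - \E[X_h^j\mid \cG'_{j-1}]$, ordered lexicographically in $(j,h)$, which is adapted and bounded in operator norm by $1$, with conditional second moment $\E[(D_h^j)^2 \mid \cdot] \preceq \E[X_h^j \mid \cdot] \preceq I$. Summing over $h\in[H]$ and $j\le k$ gives $\sum_{j=1}^k\sum_{h=1}^H X_h^j = kH\,\Sigma + \sum_{j=1}^k\sum_{h=1}^H D_h^j$, so that $\Sigma_k \succeq I + kH\,\rho_{\min} I + S_k$ where $S_k := \sum_{j=1}^k\sum_{h=1}^H D_h^j$. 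By Weyl's inequality, $\gamma_{\min}(\Sigma_k) \ge 1 + \rho_{\min} kH - \|S_k\|_{\mathrm{op}}$. It therefore suffices to show that, with probability at least $1-\delta$, uniformly over $k \ge \tau_{\min}(\delta)$, we have $\|S_k\|_{\mathrm{op}} \le \tfrac{1}{2}\rho_{\min} kH$; combining this with the displayed lower bound yields $\gamma_{\min}(\Sigma_k) \ge 1 + \tfrac{1}{2}\rho_{\min} kH$ as claimed.

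To bound $\|S_k\|_{\mathrm{op}}$ I would invoke a dimension-free matrix Freedman / matrix Bernstein inequality for martingales (e.g., Tropp's matrix Freedman inequality): with $n = kH$ increments each bounded by $1$ and total conditional variance $\preceq kH\, I$, one gets $\Prob\big(\|S_k\|_{\mathrm{op}} \ge t\big) \le 2d\exp\!\big(-t^2/(2(kH + t/3))\big)$. Setting $t = \tfrac{1}{2}\rho_{\min} kH$ and demanding the right-hand side be at most $\delta/K$ (to allow a union bound over $k \in \{1,\ldots,K\}$) leads, after solving the quadratic in $kH$, to the condition $kH \ge (16/\rho_{\min}^2 + 8/(3\rho_{\min}))\log(2dK/\delta)$; absorbing the extra $H$ factor conservatively (since $kH \ge k$ suffices for the event once $k \ge \tau_{\min}(\delta)$, and $\log(2dKH/\delta)$ only strengthens the bound) gives precisely $k \ge \tau_{\min}(\delta) = (16/\rho_{\min}^2 + 8/(3\rho_{\min}))\log(2dKH/\delta)$. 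A union bound over the at most $K$ values of $k$ then produces the stated uniform guarantee.

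The main obstacle I anticipate is not the concentration step itself but verifying the measurability and conditioning structure cleanly: one must argue that each $V_{h+1}^j$ is measurable with respect to $\cG'_{j-1}$ so that Assumption~\ref{asm:eigen} applies with the conditional mean being the \emph{same} matrix $\Sigma$ across all $h$ within an episode, and that the lexicographic martingale-difference filtration (refining $\cG'_{j-1}$ by the within-episode steps) is the right one for the matrix Freedman bound. Once that bookkeeping is in place, the rest is a routine application of matrix Bernstein plus Weyl's inequality and a union bound over $k$.
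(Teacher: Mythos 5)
Your overall strategy (matrix Freedman/Bernstein for the centered Gram matrix, then Weyl's inequality and a union bound over $k$) is the same as the paper's, and the reduction to showing $\|S_k\|_{\mathrm{op}}\le \tfrac12\rho_{\min}kH$ is correct. However, the specific martingale you build has a genuine gap, and it is exactly the point you flagged as ``bookkeeping'' and then waved through. You order the increments $D_h^j = X_h^j - \E[X_h^j\mid \cG'_{j-1}]$ lexicographically in $(j,h)$ and apply matrix Freedman to the resulting $kH$-term sum. For that sum to be a matrix martingale you need each increment to have conditional mean zero given the \emph{refined} filtration that includes steps $1,\dots,h-1$ of episode $j$. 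But Assumption~\ref{asm:eigen} only asserts $\E[X_h^j\mid \cG'_{j-1}]=\Sigma$, i.e., conditioning on the pre-episode $\sigma$-field. Within an episode, $(s_h^j,a_h^j)$ for $h\ge 2$ is correlated with the earlier steps of that same episode, so $\E[X_h^j\mid \cG_{h-1}^j]$ need not equal $\Sigma$, and $D_h^j$ is not a martingale difference with respect to the lexicographic filtration. Matrix Freedman therefore does not apply to $S_k$ as you have set it up.

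The paper avoids this by fixing $h\in[H]$ and forming $H$ \emph{separate} episode-indexed martingales $Z_{h,k}=\sum_{j=1}^k X_h^j - k\Sigma$; for each fixed $h$ the increments $X_h^j-\Sigma$ are genuine martingale differences with respect to $\{\cG'_{j}\}_j$ (which is precisely what the assumption provides), matrix Freedman is applied to each with increment bound $2$ and variance proxy $2k$, and then one union-bounds over the $H$ values of $h$ and sums $\|Z_k\|_{\mathrm{op}}\le\sum_{h=1}^H\|Z_{h,k}\|_{\mathrm{op}}\le \rho_{\min}kH/2$. This is also where the $\log(2dKH/\delta)$ in $\tau_{\min}(\delta)$ comes from (a $\delta/(KH)$ failure probability per $(h,k)$ pair), rather than being an ad hoc absorption of an ``extra $H$'' as in your write-up. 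Your argument is repairable by adopting this per-step decomposition (or by blocking the increments at the episode level, at the cost of worse constants), but as written the concentration step rests on a martingale property that the assumption does not supply.
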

\begin{proof}
We follow a similar proof technique as used in \cite{osom} in the setting of contextual linear bandits using Assumption \ref{asm:eigen} and the matrix Freedman inequality. First, note that by Assumption \ref{asm:eigen}, we have $\norms{\phi_{V_{h+1}^k}(s_h^k,a_h^k)} \le 1$, and $\E[\phi_{V_{h+1}^k}(s_h^k,a_h^k)\phi_{V_{h+1}^k}(s_h^k,a_h^k)^\top | \cG'_{k-1}]=\Sigma \succeq \rho_{min} I$ for all $h \in [H]$ and $k \in [K]$. Now, fix an $h \in [H]$, and define the following matrix martingale
\begin{align*}
    Z_{h,k} = \sum_{j=1}^k  \phi_{V_{h+1}^j}(s_h^j,a_h^j)\phi_{V_{h+1}^j}(s_h^j,a_h^j)^\top - k\Sigma~,
\end{align*}
with $Z_{h,0} = 0$. Next, consider the martingale difference sequence
\begin{align*}
    Y_{h,k} = Z_{h,k} -Z_{h,k-1}=\phi_{V_{h+1}^k}(s_h^k,a_h^k)\phi_{V_{h+1}^k}(s_h^k,a_h^k)^\top - \Sigma~.
\end{align*}
Since $\|\phi_{V_{h+1}^k}(s_h^k,a_h^k)\| \leq 1$, we have 
\begin{align*}
    \|\Sigma\|_{op} = \| \E  [\phi_{V_{h+1}^k}(s_h^k,a_h^k)\phi_{V_{h+1}^k}(s_h^k,a_h^k)^\top |\cG'_{k-1}] \|_{op} \leq  1,
\end{align*}
and as a result
\begin{align*}
    \|Y_{h,k}\|_{op}=\left\| \phi_{V_{h+1}^k}(s_h^k,a_h^k)\phi_{V_{h+1}^k}(s_h^k,a_h^k)^\top - \Sigma\right\|_{op} \leq  2.
\end{align*}
Furthermore, a simple calculation yields
\begin{align*}
    \|\E[Y_{h,k} Y_{h,k}^\top | \cG'_{k-1}]\|_{op} &=  \|\E[Y_{h,k}^\top Y_{h,k}|\cG'_{k-1}]\|_{op}\\ &= \left\| \E\left[\|\phi_{V_{h+1}^k}(s_h^k,a_h^k)\|^2\phi_{V_{h+1}^k}(s_h^k,a_h^k)\phi_{V_{h+1}^k}(s_h^k,a_h^k)^\top | \cG'_{k-1}\right] -\Sigma^2\right\| \leq 2.
\end{align*}
Now, applying matrix Freedman inequality (Theorem 13 of \cite{osom}) with $R=2, \omega^2 = 2k, u =\rho_{\min}k/2$, we obtain
\begin{align*}
    \Prob\left[ \|Z_{h,k}\|_{op} \geq \frac{\rho_{\min}k}{2} \right] \leq \frac{\delta}{KH},
\end{align*}
for any $k\geq \left( \frac{16}{\rho_{\min}^2} + \frac{8}{3 \rho_{\min}} \right) \log(2dKH/\delta)$. Note that the above concentration bound holds for any $h \in [H]$. Now, we define $Z_k=\sum_{h=1}^{H}Z_{h,k}$.
Then, applying a union bound, we obtain
\begin{align*}
    \norms{Z_k}_{op} \le \sum_{h=1}^{H} \norms{Z_{h,k}}_{op} \le \frac{\rho_{\min}kH}{2}
\end{align*}
for a given $k \in \lbrace \tau_{\min}(\delta),\ldots,K\rbrace$, with probability at least $1-\delta/K$. By Assumption~\ref{asm:eigen}, we have $\gamma_{\min}(kH\Sigma) \geq \rho_{\min}kH$. Hence, using Weyl's inequality, we obtain 
\begin{align*}
    \gamma_{\min}\left( \sum_{j=1}^k\sum_{h=1}^{H}  \phi_{V_{h+1}^j}(s_h^j,a_h^j)\phi_{V_{h+1}^j}(s_h^j,a_h^j)^\top \right) \ge \rho_{\min}Hk/2
\end{align*}
for a given $k \in \lbrace \tau_{\min}(\delta),\ldots,K\rbrace$, with probability at least $1-\delta/K$. 
Now, the result follows by taking a union bound, and by noting that $\Sigma_k=I+ \sum_{j=1}^k\sum_{h=1}^{H}  \phi_{V_{h+1}^j}(s_h^j,a_h^j)\phi_{V_{h+1}^j}(s_h^j,a_h^j)^\top$.
\end{proof}


\subsection{Analysis of \texttt{ARL-LIN(dim)}}
\label{app:dim}

First, we prove the following concentration result on the estimates of $\theta^*$ in the sup-norm, which is important in designing the model selection procedure of \texttt{ARL-LIN(dim)}.

\begin{lemma}
Suppose, Assumption \ref{asm:eigen} holds. Also, suppose that $\widehat{\theta}_{\tau}$ is the estimate of $\theta^*$ after running {{\ttfamily UCRL-VTR-LIN}} in full $d$-dimension for $\tau$ episodes with norm upper bound $b$ and confidence level $\delta$, where $\tau \in \lbrace\tau_{\min}(\delta),\ldots,K\rbrace$ and $\tau_{\min}(\delta) = \left( \frac{16}{\rho_{\min}^2} + \frac{8}{3 \rho_{\min}} \right) \log(2dKH/\delta)$. Furthermore, for any $\varepsilon \in (0,1)$, let $\tau = \Omega \left( \frac{b^2Hd \log^2(K^2 H/\delta)\log(KH)}{\rho_{\min} \varepsilon^2}\right)$. Then, we have
\begin{align*}
    \mathbb{P}\left[\norms{\widehat{\theta}_\tau - \theta^*}_{\infty} \geq \varepsilon\right] \leq 2 \delta.
\end{align*}
\label{lem:est_bound}
\end{lemma}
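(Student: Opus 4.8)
The plan is to chain three facts that are already in hand: the confidence-ellipsoid guarantee of \citet{jia2020model} for \texttt{UCRL-VTR-LIN}, the minimum-eigenvalue growth of the design matrix from Lemma~\ref{lem:eigenvalue}, and the elementary inequality $\norms{v}_\infty \le \norms{v}_2$. First I would invoke the ellipsoid guarantee: the full-$d$-dimensional run of \texttt{UCRL-VTR-LIN} maintains $\cB_k = \{\theta : \norms{\Sigma_k^{1/2}(\theta - \widehat\theta_k)}^2 \le \beta_k(\delta)\}$ with $\beta_\tau(\delta) = \mathcal{O}(b^2 + H^2 d \log(\tau H)\log^2(\tau^2 H/\delta))$, and with probability at least $1-\delta$ one has $\theta^* \in \cB_k$ for all $k$; in particular $(\widehat\theta_\tau - \theta^*)^\top \Sigma_\tau (\widehat\theta_\tau - \theta^*) \le \beta_\tau(\delta)$ on that event. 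Second, the value functions $V_{h+1}^k$ produced along this run lie in the class $\cV$ (take $\psi = \phi_{V_{h+1}^k}$, $\theta = \widehat\theta_{k-1}$, $W = \Sigma_{k-1}^{-1}$, $\eta = \sqrt{\beta_{k-1}(\delta)}$), so Assumption~\ref{asm:eigen} governs the matrices $\Sigma_k$ appearing here, and since $\tau \ge \tau_{\min}(\delta)$ by hypothesis, Lemma~\ref{lem:eigenvalue} gives $\gamma_{\min}(\Sigma_\tau) \ge 1 + \rho_{\min}\tau H/2$ on an event of probability at least $1-\delta$.

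On the intersection of these two events (probability at least $1-2\delta$ by a union bound), I would combine them as
\[
\Big(1 + \tfrac{\rho_{\min}\tau H}{2}\Big)\norms{\widehat\theta_\tau - \theta^*}_2^2 \le \gamma_{\min}(\Sigma_\tau)\,\norms{\widehat\theta_\tau - \theta^*}_2^2 \le (\widehat\theta_\tau - \theta^*)^\top \Sigma_\tau (\widehat\theta_\tau - \theta^*) \le \beta_\tau(\delta),
\]
so that $\norms{\widehat\theta_\tau - \theta^*}_\infty \le \norms{\widehat\theta_\tau - \theta^*}_2 \le \sqrt{2\beta_\tau(\delta)/(\rho_{\min}\tau H)}$. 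It then remains to check that the hypothesis $\tau = \Omega(b^2 H d \log^2(K^2H/\delta)\log(KH)/(\rho_{\min}\varepsilon^2))$ drives the right-hand side below $\varepsilon$: using $\tau \le K$ to replace $\log(\tau H)$ by $\log(KH)$ and $\log(\tau^2 H/\delta)$ by $\log(K^2 H/\delta)$ inside $\beta_\tau(\delta)$, the inequality $2\beta_\tau(\delta)/(\rho_{\min}\tau H) \le \varepsilon^2$ reduces to $\tau \gtrsim (b^2 + H^2 d \log(KH)\log^2(K^2 H/\delta))/(\rho_{\min} H \varepsilon^2)$, which is implied by the stated lower bound. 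Hence $\norms{\widehat\theta_\tau - \theta^*}_\infty \le \varepsilon$ with probability at least $1-2\delta$.

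There is essentially no hard step: the lemma is a clean consequence of Lemma~\ref{lem:eigenvalue} and the ellipsoid bound, with the $\ell_2 \to \ell_\infty$ pass being trivial. The only point that needs a little care is the mild self-reference that $\beta_\tau(\delta)$ itself depends on $\tau$; this is harmless because the dependence is only polylogarithmic, so the monotone bounds $\log(\tau H) \le \log(KH)$ and $\log(\tau^2 H/\delta) \le \log(K^2 H/\delta)$ (valid since $\tau \le K$) decouple it before one solves for $\tau$. Conceptually, the content worth emphasizing is that this fast-shrinking $\ell_2$ — hence $\ell_\infty$ — ball around $\theta^*$ is exactly what will let the diminishing thresholds in \texttt{ARL-LIN(dim)} separate the zero from the nonzero coordinates of $\theta^*$ after finitely many epochs.
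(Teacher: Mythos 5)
Your proposal is correct and follows essentially the same route as the paper's own proof: intersect the ellipsoid event with the eigenvalue event from Lemma~\ref{lem:eigenvalue}, pass from $\norms{\cdot}_{\Sigma_\tau}$ to $\norms{\cdot}_2$ to $\norms{\cdot}_\infty$, bound $\beta_\tau(\delta)$ via $\tau \le K$, and solve for $\tau$. The only cosmetic difference is that the paper absorbs the $b^2$ term into a single bound $\beta_\tau(\delta) \le C\,b^2H^2d\log(KH)\log^2(K^2H/\delta)$ before solving, whereas you keep the two terms separate; both yield the stated condition on $\tau$.
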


\begin{proof}
By Lemma~\ref{lem:eigenvalue} and the properties of {{\ttfamily UCRL-VTR-LIN}}, we obtain for all $\tau \in \lbrace \tau_{\min}(\delta),\ldots,K\rbrace$,
\begin{align*}
  \norms{\widehat{\theta}_\tau - \theta^*}_{\infty} \le \norms{\widehat{\theta}_\tau - \theta^*} \le \frac{\sqrt{\beta_\tau(\delta)}}{\sqrt{1+\rho_{\min}H\tau/2}}
\end{align*}
with probability at least $1-2\delta$. Let us now look at the confidence radius
\begin{align*}
    \beta_{\tau}(\delta) = O\left(b^2 +  H^2d\log(\tau H)\log^2(\tau^2H/\delta)\right) \le C\, b^2 H^2d\log(K H)\log^2(K^2H/\delta).
\end{align*}
for some positive constant $C$. With this, the above equation can be written as 
\begin{align*}
  \norms{\widehat{\theta}_\tau - \theta^*}_{\infty} 
  \le \sqrt{C}\,\frac{bH\sqrt{d\log(K H)\log^2(K^2H/\delta)}}{\sqrt{\rho_{\min}H\tau/2}}~.
\end{align*}
Now setting $\tau \ge  2C\,\frac{b^2Hd \log^2(K^2 H/\delta)\log(KH)}{\rho_{\min} \varepsilon^2}$, and using the fact that $\tau < K$, we get the result.
\end{proof}

Now, we have the following concentration result on the estimates $\widehat\theta^{(i)}$, which essentially implies that the support estimation phase succeeds with high probability after a certain number of epochs.
\begin{lemma}\label{lem:support-est}
Suppose Assumption~\ref{asm:eigen} holds, and for any $\delta \in (0,1]$, the initial phase length $k_0$ satisfies
$\sqrt{k_0} = \tau_{\min}(\delta)+ O \left( \frac{b^2Hd \log^2(K^2 H/\delta)\log(KH)}{\rho_{\min} (0.5)^2}\right)$, where $\tau_{\min}(\delta)$ is as defined in Lemma~\ref{lem:eigenvalue}.
Then, for all epochs $i \geq 10$, we have $\mathbb{P} \left[ || \widehat{\theta}^{(i)} - \theta^*||_{\infty} \geq (0.5)^i \right] \leq \frac{\delta}{2^{i-1}}$.   
\end{lemma}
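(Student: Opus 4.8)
The plan is to apply Lemma~\ref{lem:est_bound} once per epoch $i\ge 10$, with accuracy target $\varepsilon=(0.5)^i$ and epoch-dependent confidence level $\delta/2^i$, and then to verify that the support-estimation schedule of Algorithm~\ref{algo:dim} supplies enough samples. First I would unwind the algorithm: for $i\ge 1$ the estimate defining the active set $\mathcal{D}^{(i)}$ is $\widehat{\theta}^{(i)}=\widehat{\theta}_{\tau_{i-1}}$, where $\tau_{i-1}=\sum_{j=0}^{i-1}6^j\lceil\sqrt{k_0}\rceil=\lceil\sqrt{k_0}\rceil\,\tfrac{6^i-1}{5}\ge 6^{i-1}\lceil\sqrt{k_0}\rceil$ is the number of episodes the full-$d$ support-estimation run has completed by the start of epoch $i$. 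I would also record that, since epoch $i$ alone consumes $36^i k_0$ episodes in its regret-minimization phase, at most $N=O(\log K)$ epochs can occur within $K$ episodes, so throughout we have $10\le i\le N=O(\log K)$ and in particular $\log(2^i)=O(\log K)$.

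Next I would apply Lemma~\ref{lem:est_bound} to $\widehat{\theta}_{\tau_{i-1}}$ with the substitutions $\varepsilon\leftarrow(0.5)^i$ and $\delta\leftarrow\delta/2^i$. This is legitimate even though Algorithm~\ref{algo:dim} runs the support-estimation phase at the fixed level $\delta$: the least-squares point estimate $\widehat{\theta}_k$ of~\eqref{eq:theta-est} does not depend on the confidence parameter fed to \texttt{UCRL-VTR-LIN} (that parameter only rescales the planning ellipsoid $\cB_k$), so both the self-normalized concentration of $\theta^*$ about $\widehat{\theta}_{\tau_{i-1}}$ and the eigenvalue bound of Lemma~\ref{lem:eigenvalue} can be instantiated at level $\delta/2^i$. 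The lemma then yields $\mathbb{P}[\|\widehat{\theta}^{(i)}-\theta^*\|_\infty\ge(0.5)^i]\le 2\cdot\delta/2^i=\delta/2^{i-1}$, which is exactly the claim, provided its two quantitative hypotheses hold under these substitutions: \textbf{(i)} $\tau_{i-1}\in\{\tau_{\min}(\delta/2^i),\dots,K\}$, and \textbf{(ii)} $\tau_{i-1}=\Omega\!\big(b^2 H d\,\log^2(K^2 H 2^i/\delta)\log(KH)\,/\,(\rho_{\min}(0.5)^{2i})\big)$.

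The crux — and the expected main obstacle — is \textbf{(ii)}: its right-hand side grows like $4^i$ while the available budget grows like $6^i$. I would exploit this gap by writing $6^{i-1}=\tfrac16(1.5)^i 4^i$, so that $\tau_{i-1}\ge 6^{i-1}\sqrt{k_0}\ge\tfrac16(1.5)^i\,4^i\,\sqrt{k_0}$, and by recalling that the hypothesis on $k_0$ furnishes $\sqrt{k_0}\ge\tau_{\min}(\delta)+c\,b^2 H d\log^2(K^2 H/\delta)\log(KH)\,/\,(\rho_{\min}(0.5)^2)$ for a constant $c$ that may be taken as large as needed. Since $\log(K^2 H 2^i/\delta)\le\log(K^2 H/\delta)+i\log 2$ and $i=O(\log K)=O(\log(K^2 H/\delta))$, the logarithmic factor in \textbf{(ii)} is at most a universal-constant multiple of the one in the $k_0$ budget; hence \textbf{(ii)} reduces to an inequality of the form $(1.5)^i\ge C$ for a universal constant $C$, which holds once $i\ge 10$ (with $c$ chosen accordingly). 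Hypothesis \textbf{(i)} is then routine: $\tau_{i-1}\ge 6^{i-1}\tau_{\min}(\delta)$ dominates $\tau_{\min}(\delta/2^i)=\tau_{\min}(\delta)+O(i/\rho_{\min}^2)$ for $i\ge 10$, and $\tau_{i-1}\le K$ because the support-estimation episodes are a subset of the $K$ episodes played. Combining the two verifications with the displayed bound completes the proof.
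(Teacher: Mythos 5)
Your proof is correct and follows essentially the same route as the paper's: apply Lemma~\ref{lem:est_bound} at epoch $i$ with $\varepsilon=(0.5)^i$ and confidence $\delta/2^i$, then verify that the cumulative support-estimation budget $\tau_{i-1}=\lceil\sqrt{k_0}\rceil(6^i-1)/5$ dominates the required sample count, which grows like $4^i$ times lower-order factors (the paper phrases this as $\tau_{i-1}\ge i\,4^i\lceil\sqrt{k_0}\rceil$ for $i\ge 10$, you as $6^{i-1}=\tfrac16(1.5)^i4^i$ absorbing the polylog terms). Your additional remarks — that the least-squares point estimate is independent of the confidence parameter fed to the planner, and the explicit check of $\tau_{i-1}\ge\tau_{\min}(\delta/2^i)$ — tighten details the paper glosses over but do not change the argument.
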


\begin{proof}
Note that, for each epoch $i \ge 1$, $\widehat{\theta}^{(i)}$ is computed by considering the samples of {{\ttfamily UCRL-VTR-LIN}} over $\tau_{i-1}$ episodes, where $\tau_{i-1}=\sum_{j=0}^{i-1}6^j \lceil \sqrt{k_0} \rceil$.
If $\tau_{\min}(\delta) \leq \tau_{i-1} \leq K$ and $\tau_{i-1}= \Omega \left( \frac{b^2Hd \log^2(K^2 H2^{i}/\delta)\log(KH)}{\rho_{\min} (0.5)^{2i}}\right)$, then by Lemma \ref{lem:est_bound}, we have
\begin{align*}
    \mathbb{P} \left[ || \widehat{\theta}^{(i)} - \theta^*||_{\infty} \geq (0.5)^i \right] \leq \frac{\delta}{2^{i-1}}.
\end{align*}
Consider the epoch $i=1$. In this case, $\widehat \theta^{(1)}$ is computed by samples of {{\ttfamily UCRL-VTR-LIN}} over $\tau_0=\lceil \sqrt{k_0} \rceil$ episodes. Then, the choice $\sqrt{k_0} = \tau_{\min}(\delta)+ \cO \left( \frac{b^2Hd \log^2(K^2 H/\delta)\log(KH)}{\rho_{\min} (0.5)^2}\right)$ (we have added $\tau_{\min}(\delta)$ to make the calculations easier), ensures that 
\begin{align*}
    \mathbb{P}\left[||\widehat{\theta}^{1} - \theta^*||_{\infty} \geq 0.5 \right] \leq \delta~.
\end{align*}
Note that, we require $\tau_{i-1} \geq i\, 4^i \lceil\sqrt{k_0}\rceil$.
The proof is concluded if we can show that this holds for epochs $i \geq 10$. To this end, we note that
\begin{align*}
    \tau_{i-1}=\sum_{j=0}^{i-1}6^j \lceil \sqrt{k_0} \rceil  = \lceil \sqrt{k_0} \rceil \frac{(6^i-1)}{5}
    \geq i\, 4^i \lceil \sqrt{k_0} \rceil,
\end{align*}
where the last inequality holds for all $i \geq 10$.
\end{proof}

Armed with the above results, we have the following regret bound for \texttt{ARL-LIN(dim)}.
\begin{theorem}[Cumulative regret of \texttt{ARL-LIN(dim)}]\label{thm:dim}
Suppose \texttt{ARL-LIN(dim)} is run with parameter $k_0$ chosen as in Lemma~\ref{lem:support-est} for $K$ episodes. Then, with probability at least $1-3\delta$, its regret 
\begin{align*}
    R(T) \!=\! \Tilde{\mathcal{O}}\!\left(\! \frac{Hk_0}{\gamma^{5.18}}    \!+\! \!\left(\!bd^*\!\sqrt{\!H^3T} \!+\! b\,dH^2K^{1/4}\!\right)\!\!\polylog (T\!/\delta)   \!\!\right)\!\!,
\end{align*}
where $\gamma =  \min\{|\theta^*(j)|: \theta^*(j) \neq 0\} $ with $\theta^*(j)$ denoting the $j$-th coordinate of $\theta^*$.
\end{theorem}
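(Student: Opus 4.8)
\textit{Proof plan.}
The plan is to split the episodes run by \texttt{ARL-LIN(dim)} into three groups and bound the regret of each separately: (i) the regret-minimization phases of the epochs \emph{before} the support locks in, (ii) the regret-minimization phases of the epochs \emph{after} lock-in, and (iii) the support-estimation phases, which by construction form a single uninterrupted run of \texttt{UCRL-VTR-LIN} in the full $d$ dimensions with confidence $\delta$ and norm bound $b$. First I would fix the lock-in epoch $i^*:=\max\{10,\lceil\log_2(c/\gamma)\rceil\}$ for a suitable universal constant $c$, chosen so that $(0.5)^{i^*}$ is small relative to $\gamma$. By Lemma~\ref{lem:support-est} and a union bound, with probability at least $1-\delta$ the estimates satisfy $\|\widehat\theta^{(i)}-\theta^*\|_\infty<(0.5)^i$ simultaneously for all $i\ge i^*$; call this event $\mathcal{E}_{\mathrm{supp}}$. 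On $\mathcal{E}_{\mathrm{supp}}$ the thresholding rule $\mathcal{D}^{(i)}=\{j:|\widehat\theta^{(i)}(j)|\ge(0.5)^{i+1}\}$ discards every null coordinate of $\theta^*$ and retains every coordinate of magnitude at least $\gamma$, so $\mathcal{D}^{(i)}=\mathrm{supp}(\theta^*)$ for every $i\ge i^*$; i.e.\ the true support is recovered exactly from epoch $i^*$ onward.

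Next I would bound the pre-lock-in cost trivially: the regret-minimization phases of epochs $0,\dots,i^*-1$ last $\sum_{i<i^*}36^i k_0=O(36^{i^*}k_0)$ episodes, each contributing at most $H$ to the regret, so this group contributes $O(H\,36^{i^*}k_0)$. Since $36^{i^*}=2^{i^*\log_2 36}$ and $i^*=O(\log_2(1/\gamma))$, this is $O(Hk_0\,\gamma^{-\log_2 36})$, which is the first term $\ordertil(Hk_0/\gamma^{5.18})$ (as $\log_2 36<5.18$). For the post-lock-in regret-minimization phases, on $\mathcal{E}_{\mathrm{supp}}$ each epoch $i\ge i^*$ spawns a fresh instance of \texttt{UCRL-VTR-LIN} on the restriction of the MDP to $\mathrm{supp}(\theta^*)$, which is a genuine $d^*$-dimensional linear kernel MDP whose parameter still has norm at most $b$, run for $36^i k_0$ episodes at confidence $\delta_i=\delta/2^i$. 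By the regret guarantee of \citet{jia2020model} this phase has regret $\ordertil(b\,d^*\sqrt{H^3\cdot 36^i k_0\cdot H})=\ordertil(b\,d^*H^2\,6^i\sqrt{k_0})$ with probability at least $1-\delta_i$. Writing $N=O(\log_{36}K)$ for the last epoch, the total episode count is $K=\Theta(36^N k_0)$, hence $6^N=\Theta(\sqrt{K/k_0})$, and summing the geometric series gives
\[
\sum_{i=i^*}^{N}\ordertil\!\big(b\,d^*H^2\,6^i\sqrt{k_0}\big)=\ordertil\!\big(b\,d^*H^2\,6^N\sqrt{k_0}\big)=\ordertil\!\big(b\,d^*H^2\sqrt{K}\big)=\ordertil\!\big(b\,d^*\sqrt{H^3T}\big),
\]
using $K=T/H$; this is the leading term.

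Finally, the support-estimation phases form one run of \texttt{UCRL-VTR-LIN} in $d$ dimensions of total length $\tau_N=\Theta(6^N\sqrt{k_0})=\Theta(\sqrt{K})$ episodes, so \citet{jia2020model} gives regret $\ordertil(b\,d\sqrt{H^3\tau_N H})=\ordertil(b\,dH^2\sqrt{\tau_N})=\ordertil(b\,dH^2K^{1/4})$ with probability at least $1-\delta$, which is the last term. Adding the three contributions and taking a union bound over $\mathcal{E}_{\mathrm{supp}}$, the post-lock-in per-epoch events ($\sum_i\delta_i\le\delta$), and the support-estimation event ($\delta$) yields the claimed bound with probability at least $1-3\delta$.

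I expect the main obstacle to be the joint design and bookkeeping of the three interleaved schedules --- the $36^i k_0$ regret-minimization lengths, the $6^i\lceil\sqrt{k_0}\rceil$ support-estimation lengths, and the $(0.5)^i$ thresholds --- so that simultaneously: (a) by epoch $i$ the support-estimation run has collected $\tau_{i-1}\gtrsim i\,4^i\lceil\sqrt{k_0}\rceil$ episodes, which is exactly what Lemma~\ref{lem:est_bound} (via Assumption~\ref{asm:eigen} and Lemma~\ref{lem:eigenvalue}) needs to push the $\ell_\infty$-error below $(0.5)^i$; (b) the total support-estimation length stays $\Theta(\sqrt{K})$, so its regret is only $O(K^{1/4})$; and (c) the square root of each regret-minimization length matches the corresponding support-estimation length, so that both geometric sums collapse onto their last term and the pre-lock-in cost is precisely $36^{i^*}k_0\asymp k_0/\gamma^{\log_2 36}$. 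Checking that restarting \texttt{UCRL-VTR-LIN} afresh in each regret-minimization phase costs only a constant factor (phase lengths grow geometrically and there are only $O(\log K)$ epochs) is routine once the schedule is fixed.
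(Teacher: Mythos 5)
Your proposal follows essentially the same route as the paper's own proof: the same three-way decomposition (pre-lock-in regret-minimization phases bounded trivially by $H\cdot 36^{i(\gamma)}k_0$, post-lock-in phases bounded via the $d^*$-dimensional \texttt{UCRL-VTR-LIN} guarantee of \citet{jia2020model} summed over a geometric schedule, and the support-estimation phases treated as a single full-dimensional run of length $\Theta(\sqrt{K})$ contributing $\ordertil(b\,dH^2K^{1/4})$), the same lock-in event from Lemma~\ref{lem:support-est}, and the same final union bound yielding $1-3\delta$. No substantive differences.
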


\begin{proof}
We first calculate the probability of the event $\mathcal{E}=\left\lbrace\bigcap_{i \geq 10} \left\{||  \widehat{\theta}^{(i)} - \theta^*||_{\infty} \leq (0.5)^i \right\}\right\rbrace$, which follows from Lemma 4 by a straightforward union bound. Specifically, we have
\begin{align*}
        \mathbb{P}[\cE]=\mathbb{P} \left[ \bigcap_{i \geq 10} \left\{||  \widehat{\theta}^{(i)} - \theta^*||_{\infty} \leq (0.5)^i \right\} \right] &= 1 - \mathbb{P} \left[ \bigcup_{i \geq 10}\left\{|| \widehat{\theta}^{(i)} - \theta^*||_{\infty} \geq (0.5)^i \right\} \right]\\
        &\geq 1 - \sum_{i \geq 10} \mathbb{P} \left[ ||\widehat{\theta}^{(i)}-\theta^*||_{\infty} \geq (0.5)^i \right]\\
        &\geq 1 - \sum_{i\ge 10} \frac{\delta}{2^{i-1}}
        \geq 1 - \sum_{i\geq 10}\frac{\delta}{2^{i-1}}
        =1-\delta.
\end{align*}
Now, consider the phase $i(\gamma):= \max \left\lbrace 10,\log_{2} \left( \frac{1}{\gamma} \right)\right\rbrace$. Note that when event $\mathcal{E}$ holds, then for all epochs $i \geq i(\gamma)$, $\mathcal{D}^{(i)}$ is the correct set of $d^*$ non-zero coordinates of $\theta^*$. Thus, with probability at least $1-\delta$, the cumulative regret of \texttt{ARL-LIN(dim)} after $K$ episodes is given by
\begin{align*}
    R(T)  \leq H\sum_{j=0}^{i(\gamma)-1} 36^j k_0    + \sum_{j = i(\gamma)}^{N} R^{\texttt{UCRL-VTR-LIN}}_{d^*}(36^jk_0,\delta_j,b)
    +   R_{d}^{\texttt{UCRL-VTR-LIN}}\left(\sum_{j=0}^{N}6^j \lceil\sqrt{ k_0} \rceil,\delta,b\right),
\end{align*}
where $N$ denotes the total number of epochs.
Note that $N=\mathcal{O}\left(\bigg\lceil \log_{36} \left( \frac{K}{k_0} \right) \bigg\rceil \right)$, and hence
$\sum_{j=0}^{N}6^j \lceil\sqrt{ k_0}\rceil = O \left(\sqrt{K}\right)$.
Then, the third term in the above regret expression can be upper bounded by $R_{d}^{\texttt{UCRL-VTR-LIN}}(\sqrt{K},\delta,b)$. Here, the subscript $d$ denotes that \texttt{UCRL-VTR-LIN} in full $d$-coordinates during the support estimation phases of all epochs $j \ge 0$. Thus, using the result of \citet{jia2020model}, the regret suffered by \texttt{ARL-LIN(dim)} during all the support estimation phases can be upper bounded as
\begin{align*}
     R_{d}^{\texttt{UCRL-VTR-LIN}}\left(\sum_{j=0}^{N}6^j \lceil\sqrt{ k_0} \rceil,\delta,b\right) &\le R_{d}^{\texttt{UCRL-VTR-LIN}}(\sqrt{K},\delta,b)\\ &= \mathcal{O}\left(b\,dH^2K^{1/4}\log(\sqrt{K}H)\log(KH/\delta) \right)
\end{align*}
with probability at least $1-\delta$.

Now, we turn to upper bound the second term of the regret expression. Here, the subscript $d^*$ denotes that \texttt{UCRL-VTR-LIN} is run in only $d^*$-coordinates (with high probability) during the regret minimization phases of all epochs $j \ge i(\gamma)$. Now, using the result of \citet{jia2020model}, for all epochs $j \ge i(\gamma)$, we have
\begin{align*}
  R_{d*}^{\texttt{UCRL-VTR-LIN}}(36^jk_0,\delta_j,b) =\mathcal{O}\left(b\,d^*H^2\sqrt{36^jk_0}\log(36^jk_0 H)\log(36^{2j}k_0^2H/\delta_j) \right)
\end{align*}
with probability at least $1-\delta_j$,
Substituting $\delta_j=\delta/2^j$, the regret suffered by \texttt{ARL-LIN(dim)} during all the regret minimization phases can be upper bounded as
\begin{align*}
    \sum_{j = i(\gamma)}^{N} R_{d^*}^{\texttt{UCRL-VTR-LIN}}(36^jk_0,\delta_j,b)   &= \sum_{j = i(\gamma)}^{N} \cO\left( bd^*H^2 \sqrt{36^j k_0} \, \poly(j) \, \log(k_0H)\log(k_0^2H/\delta) \right)\\  &\leq \cO\left( bd^*H^2 \,\, \mathsf{poly}(N)  \,\, \log(k_0H)\log(k_0^2H/\delta) \right) \sum_{j = i(\gamma)}^{N}6^j\lceil\sqrt{k_0}\rceil \\
    &= \cO\left( bd^*H^2\sqrt{K} \,\, \mathsf{polylog}(K/k_0)  \,\, \log(k_0H)\log(k_0^2H/\delta) \right)
\end{align*}
with probability at least $1 - \sum_{j \geq i(\gamma)}\delta/2^j \geq 1-\delta$.
Here, we have used that the total number of epochs $N=\mathcal{O}\left(\bigg\lceil \log_{36} \left( \frac{K}{k_0} \right) \bigg\rceil \right)=\cO(\log(K/k_0))$.

Putting everything together, the regret of \texttt{ARL-LIN(dim)} is upper bounded as
\begin{align*}
  R(T) &\leq  Hk_0 36^{i(\gamma)} +  \cO\left( bd^*\sqrt{H^3T} \,\, \mathsf{polylog}(K/k_0)  \,\, \log(k_0H)\log(k_0^2H/\delta) \right)\\ & \quad \quad+ \mathcal{O}\left(b\,dH^2K^{1/4}\log(\sqrt{K}H)\log(KH/\delta) \right)\\
  & = \cO\left( \frac{H}{\gamma^{5.18}} k_0   + \left(bd^*\sqrt{H^3T} + b\,dH^2K^{1/4}\right) \polylog(K/k_0) \polylog (T/\delta)  \right)
  \end{align*}
with probability at least $1-3\delta$. Here, in the last step, we have used that $36 \leq 2^{5.18}$, which completes the proof. 
\end{proof}



\subsection{Norm As Complexity Measure}
\label{app:norm}
In this section, we define $\norms{\theta^*}$ as a measure of complexity of the problem. In linear kernel MDPs, if $\theta^*$ is close to $0$ (with small $\norms{\theta^*}$), the set of states $s'$ for the next step will have a small cardinality. Similarly, when $\theta^*$ is away from $0$ (with large $\|\theta^*\|$), the above-mentioned cardinality will be quite large. Hence, we see that $\norms{\theta^*}$ serves as a natural measure of complexity.




\subsubsection{Algorithm: Adaptive Reinforcement Learning - Linear (norm)}
\begin{algorithm}[t!]
  \caption{Adaptive Reinforcement Learning - Linear (norm) -- \texttt{ARL-LIN(norm)}}
  \begin{algorithmic}[1]
 \STATE  \textbf{Input:} An upper bound $b$ of $\eucnorm{\theta^*}$, initial epoch length $k_1$, confidence level $\delta \in (0,1]$
 \STATE Initialize estimate of $\|\theta^*\|$ as $b^{(1)} = b$, set $\delta_1=\delta$
  \FOR{ epochs $i=1,2 \ldots $}
  \STATE Play \texttt{UCRL-VTR-LIN}  with norm estimate $b^{(i)}$ for $k_i$ episodes with confidence level $\delta_i$ 
  \STATE Refine estimate of $\|\theta^*\|$ as $ b^{(i+1)}\! =\! \max_{\theta \in \cB_{k_{i}}} \!\!\|\theta\|$
  \STATE Set $k_{i+1} = 2 k_{i}$, $\delta_{i+1} = \frac{\delta_i}{2}$
    \ENDFOR
  \end{algorithmic}
  \label{algo:norm}
\end{algorithm}
We propose and analyze an algorithm (Algorithm~\ref{algo:norm}), that adapts to the problem complexity $\|\theta^*\|$, and as a result, the regret obtained will depend on $\|\theta^*\|$. In prior work \citep{jia2020model}, usually it is assumed that $\theta^*$ lies in a norm ball with known radius, i.e., $\|\theta^*\| \leq b$. This is a non-adaptive algorithm and the algorithm uses $b$ as a proxy for the problem complexity, which can be a huge over-estimate. In sharp contrast, we start with this over estimate of $\|\theta^*\|$, and successively refine this estimate over multiple epochs. We show that this refinement strategy yields a consistent sequence of estimates of $\|\theta^*\|$, and as a consequence, our regret bound depends on $\|\theta^*\|$, but not on its upper bound $b$.

\paragraph{Our Approach:} Similar to \texttt{ARL-GEN}, we consider doubling epochs - at each epoch $i \ge 1$, \texttt{UCRL-VTR-LIN} is run for $k_i=2^{i-1}k_1$ episodes with confidence level $\delta_i=\frac{\delta}{2^{i-1}}$ and norm estimate $b^{(i)}$, where the initial epoch length $k_1$ and confidence level $\delta$ are parameters of the algorithm. We begin with $b$ as an initial (over) estimate of $\eucnorm{\theta^*}$, and at the end of the $i$-th epoch, based on the confidence set built, we choose the new estimate as $ b^{(i+1)} = \max_{\theta \in \cB_{k_{i}}} \|\theta\|$. We argue that this sequence of estimates is indeed consistent, and as a result, the regret depends on $\eucnorm{\theta^*}$.



\subsubsection{Analysis of \texttt{ARL-LIN(norm)}}



First, we present the main result of this section. We show that the norm estimates $b^{(i)}$ computed 
by \texttt{ARL-LIN(norm)} (Algorithm~\ref{algo:norm}) indeed converges to the true norm $\norms{\theta^*}$ at an exponential rate with high probability.

\begin{lemma}[Convergence of norm estimates]
\label{lem:sequence}
Suppose Assumption~\ref{asm:eigen} holds. Also, suppose that, for any $\delta \in (0,1]$, the length $k_1$ of the initial epoch satisfies 
\begin{align*}
        k_1 \!\geq\! \max\! \left\{\!\tau_{\min}(\delta) \log_2\!\left(\!1\!+\!K/k_1\!\right), C \left ( b\,\max\{p,q\}  \right)^2  d \right\},
    \end{align*}
where $p \!=\! O\!\left(\!\frac{1}{\sqrt{\rho_{\min}H}}\!\right)$, $q \!=\! O\!\left(\!\sqrt{\!\frac{H\log(k_1H)\log^2(k_1^2 H/\delta)}{\rho_{\min}}}  \right)$, $\tau_{\min}(\delta)$ is as defined in Lemma~\ref{lem:eigenvalue}, and $C > 1$ is some sufficiently large universal constant. Then, with probability exceeding $1-4\delta$, the sequence $\{b^{(i)}\}_{i=1}^\infty$ converges to $\|\theta^*\|$ at a rate $\mathcal{O}\left(\frac{i^{3/2}}{2^i}\right)$, and $b^{(i)} \leq c_1 \|\theta^*\| + c_2$, where $c_1,c_2 > 0$ are universal constants.
\end{lemma}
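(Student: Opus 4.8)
The plan is to track the norm estimates $b^{(i)}$ through a one-step recursion obtained by combining the validity of \texttt{UCRL-VTR-LIN}'s confidence ellipsoid with the shrinkage rate from Lemma~\ref{lem:eigenvalue}. Work on the event $\cE$ on which, in every epoch $i$, the ellipsoid $\cB_{k_i}$ built by epoch $i$'s run of \texttt{UCRL-VTR-LIN} contains $\theta^*$ \emph{and} $\gamma_{\min}(\Sigma_{k_i}) \ge 1 + \rho_{\min} k_i H/2$. The first property holds with probability at least $1 - \sum_{i\ge1}\delta_i = 1-2\delta$ by a union bound over epochs on the confidence-set guarantee of \citet{jia2020model} (the linear counterpart of Lemma~\ref{lem:conc-ucrlvtr}), provided $\|\theta^*\|\le b^{(i)}$ in each epoch; the second holds with probability at least $1-2\delta$ by a union bound over epochs on Lemma~\ref{lem:eigenvalue}, the hypothesis $k_1 \ge \tau_{\min}(\delta)\log_2(1+K/k_1)$ being what guarantees the premise $k_i \ge \tau_{\min}(\delta_i)$ in every epoch that fits within the $K$-episode horizon. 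Together these give the stated $1-4\delta$. First I would verify, by induction on $i$, that $b^{(i)} \ge \|\theta^*\|$ on $\cE$: the base case is $b^{(1)}=b\ge\|\theta^*\|$; and if $b^{(i)}\ge\|\theta^*\|$ then epoch $i$ runs with a valid norm bound, so $\theta^*\in\cB_{k_i}$ and hence $b^{(i+1)}=\max_{\theta\in\cB_{k_i}}\|\theta\| \ge \|\theta^*\|$ --- which also discharges the ``$\|\theta^*\|\le b^{(i)}$'' proviso used above.

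Next, on $\cE$, since $\cB_{k_i}=\{\theta:\|\Sigma_{k_i}^{1/2}(\theta-\widehat\theta_{k_i})\|^2\le\beta_{k_i}(\delta_i)\}$ and $\Sigma_{k_i}$ has smallest eigenvalue at least $1+\rho_{\min}k_iH/2$, every $\theta\in\cB_{k_i}$ satisfies $\|\theta-\widehat\theta_{k_i}\|\le\sqrt{\beta_{k_i}(\delta_i)/(1+\rho_{\min}k_iH/2)}$; applying this both to the maximizer defining $b^{(i+1)}$ and to $\theta^*\in\cB_{k_i}$, and triangulating through $\widehat\theta_{k_i}$, yields
\begin{align*}
    b^{(i+1)} \;\le\; \|\theta^*\| + \frac{2\sqrt{\beta_{k_i}(\delta_i)}}{\sqrt{1+\rho_{\min}k_iH/2}}.
\end{align*}
Substituting $\beta_{k_i}(\delta_i)=\order\big((b^{(i)})^2 + H^2 d\log(k_iH)\log^2(k_i^2H/\delta_i)\big)$ and separating the two summands gives a recursion
\begin{align*}
    b^{(i+1)} \;\le\; \|\theta^*\| + \lambda_i\, b^{(i)} + \epsilon_i, \qquad \lambda_i = \frac{C}{\sqrt{\rho_{\min}k_iH}},
\end{align*}
with $\epsilon_i = \order\big(H\sqrt{d\log(k_iH)}\,\log(k_i^2H/\delta_i)/\sqrt{\rho_{\min}k_iH}\big)$ a deterministic, slowly-varying term.

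Because $k_i = 2^{i-1}k_1$ is nondecreasing, $\lambda_i \le \lambda_1 = C/\sqrt{\rho_{\min}k_1H}$, and the lower bound $k_1 \ge C(b\max\{p,q\})^2 d$, with $p = \order(1/\sqrt{\rho_{\min}H})$ and $q = \order(\sqrt{H\log(k_1H)\log^2(k_1^2H/\delta)/\rho_{\min}})$, is exactly calibrated so that $\lambda_1 \le 1/2$, $\lambda_1 b^{(1)} = \lambda_1 b = \order(1)$, and $\epsilon_1 = \order(1)$ (the two regimes of $\max\{p,q\}$ matching the $b^2$-term and the $H^2d\,\polylog$-term of $\beta$). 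A short induction on the displayed recursion, with $b^{(2)} \le \|\theta^*\| + \order(1)$ as base case and contraction factor $\le\frac12$ thereafter, then gives the uniform bound $b^{(i)} \le c_1\|\theta^*\| + c_2 =: B$ for universal $c_1,c_2$. Finally, re-inserting this uniform bound into the first display and using $k_i = 2^{i-1}k_1$, $\delta_i = \delta/2^{i-1}$ --- so that $\log(k_iH) = \order(i)$, $\log(k_i^2H/\delta_i) = \order(i)$, hence $\sqrt{\beta_{k_i}(\delta_i)} = \order(B + H\sqrt{d}\,i^{3/2})$ while $\sqrt{1+\rho_{\min}k_iH/2} = \Theta(2^{i/2}\sqrt{\rho_{\min}k_1H})$ --- gives
\begin{align*}
    0 \;\le\; b^{(i+1)}-\|\theta^*\| \;\le\; \frac{2\sqrt{\beta_{k_i}(\delta_i)}}{\sqrt{1+\rho_{\min}k_iH/2}} \;=\; \ordertil\!\left(\frac{i^{3/2}}{\sqrt{k_i}}\right),
\end{align*}
which, by $k_i = 2^{i-1}k_1$, is the asserted exponential-in-$i$ convergence rate.

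The main obstacle is the self-referential structure of the recursion: $\beta_{k_i}$ depends on the previous estimate $b^{(i)}$, so one cannot read off any rate until the recursion $b^{(i+1)}\le\|\theta^*\|+\lambda_i b^{(i)}+\epsilon_i$ has been shown to be a genuine contraction --- which is precisely what the $(b\max\{p,q\})^2d$ lower bound on the initial epoch length buys, and is also why the uniform bound on $b^{(i)}$ carries slack constants $c_1,c_2$ rather than collapsing to $\|\theta^*\|$ --- and only then may one re-inject the uniform bound to extract the $\poly(i)/\sqrt{k_i}$ decay. The secondary nuisance is the bookkeeping of the geometrically shrinking $\delta_i$ and the logarithmic factors accumulating across epochs, together with checking that $k_i \ge \tau_{\min}(\delta_i)$ throughout so that Lemma~\ref{lem:eigenvalue} remains in force.
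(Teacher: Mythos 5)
Your proposal is correct and follows essentially the same route as the paper: convert the confidence ellipsoid to a Euclidean ball via Lemma~\ref{lem:eigenvalue}, triangulate through $\widehat\theta_{k_i}$ to get $b^{(i+1)}\le\|\theta^*\|+2\sqrt{\beta_{k_i}(\delta_i)}/\sqrt{1+\rho_{\min}Hk_i/2}$, use the lower bound on $k_1$ to tame the self-referential dependence of $\beta_{k_i}$ on $b^{(i)}$ and unroll the resulting contraction into the uniform bound $c_1\|\theta^*\|+c_2$, then read off the rate. (Your organization --- establishing the uniform bound before extracting the rate, and proving $b^{(i)}\ge\|\theta^*\|$ by explicit induction --- is if anything slightly cleaner, and your $i^{3/2}/2^{i/2}$ rate is what the paper's own displayed recursion actually yields.)
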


\begin{proof}
We consider doubling epochs, with epoch lengths  $k_i = 2^{i-1} k_1$ for $i \in \{1,\ldots,N\}$, where $k_1$ is the initial epoch length and $N$ is the number of epochs. From the doubling principle, we obtain
\begin{align*}
    \sum_{i=1}^N 2^{i-1}k_1 = K \,\, \Rightarrow \, N  = \log_2 \left(1+ \frac{K}{k_1} \right) =\cO\left(\log (K/k_1)\right).
\end{align*}
Now, let us consider the $i$-th epoch, and let $\widehat{\theta}_{k_i}$ be the least square estimate of $\theta^*$ at the end of epoch $i$, which is the estimate computed by \texttt{UCRL-VTR-LIN} after $k_i$ episodes. The confidence interval at the end of epoch $i$, i.e., after \texttt{UCRL-VTR-LIN} is run with a norm estimate $b^{(i)}$ for $k_i$ episodes with confidence level $\delta_i$, is given by
\begin{align*}
    \mathcal{B}_{k_i} = \left \lbrace \theta \in \real^d: \|\theta - \widehat{\theta}_{k_i}\|_{\Sigma_{k_i}} \leq \sqrt{\beta_{k_i}(\delta_i)} \right \rbrace.
\end{align*}
Here $\beta_{k_i}(\delta_i)$ denotes the radius and $\Sigma_{k_i}$ denotes the shape of the ellipsoid.
Using Lemma~\ref{lem:eigenvalue}, one can rewrite $\mathcal{B}_{k_i}$ as 
\begin{align*}
    \mathcal{B}_{k_i} = \left \lbrace \theta \in \real^d: \|\theta - \widehat{\theta}_{k_i}\| \leq \frac{ \sqrt{\beta_{k_i}(\delta_i)}}{\sqrt{1+\rho_{\min}Hk_i/2}} \right \rbrace,
\end{align*}
with probability at least $1-\delta_i = 1-\delta/2^{i-1}$. Here, from Lemma \ref{lem:eigenvalue}, we use the fact that $\gamma_{\min}(\Sigma_{k_i}) \geq 1+ \rho_{\min}H k_i/2$, provided $k_i \geq \tau_{\min}(\delta/2^{i-1})$. To ensure this condition, we take (the sufficient condition) $k_1 \geq \tau_{\min}(\delta) N $. 
Hence, with $k_1$ satisfying $k_1 \geq \tau_{\min}(\delta)\log_2 \left(1+ \frac{K}{k_1} \right) $, we ensure that $\gamma_{\min}(\Sigma_{k_i}) \geq 1+ \rho_{\min} Hk_i/2$. Also, we know that $\theta^* \in \mathcal{B}_{k_i}$ with probability at least $1-\delta_i$. Hence, we obtain
\begin{align*}
    \|\widehat{\theta}_{k_i}\| \leq \|\theta^*\| + \frac{ \sqrt{\beta_{k_i}(\delta_i)}}{\sqrt{1+\rho_{\min}Hk_i/2}}
\end{align*}
with probability at least $1-2\delta_i$. Now, recall that at the end of the $i$-th epoch, \texttt{ARL-LIN(norm)} set the estimate of $\|\theta^*\|$ to
\begin{align*}
    b^{(i+1)} = \max_{\theta \in \mathcal{B}_{k_i}} \|\theta\|.
\end{align*}
From the definition of $\mathcal{B}_{k_i}$, we obtain
\begin{align*}
    b^{(i+1)} =\|\widehat{\theta}_{k_i}\| +  \frac{ \sqrt{\beta_{k_i}(\delta_i)}}{\sqrt{1+\rho_{\min}Hk_i/2}} 
     \leq \|\theta^*\| + 2 \frac{ \sqrt{\beta_{k_i}(\delta_i)}}{\sqrt{1+\rho_{\min}Hk_i/2}}
\end{align*}
with probability exceeding $1-2\delta_i$. Let us now look at the confidence radius
\begin{align*}
    \beta_{k_i}(\delta_i) = O\left((b^{(i)})^2 +  H^2d\log(k_iH)\log^2(k_i^2H/\delta_i)\right).
\end{align*}
We now substitute $k_i = 2^{i-1}k_1$ and $\delta_i = \frac{\delta}{2^{i-1}}$ to obtain
\begin{align*}
    \sqrt{1+\rho_{\min}Hk_i/2} \geq 2^{\frac{i-2}{2}} \sqrt{\rho_{\min}Hk_1}, \;\; \text{and}
\end{align*}
\begin{align*}
    \frac{\sqrt{\beta_{k_i}(\delta_i)}}{ \sqrt{1+\rho_{\min}Hk_i/2}} \leq \frac{C_1}{2}\frac{ b^{(i)}}{2^{\frac{i-2}{2}}\sqrt{\rho_{\min}Hk_1}} + \frac{C_2}{2} \frac{i^{3/2}}{2^{\frac{i-2}{2}}\sqrt{\rho_{\min}Hk_1}} \left(H\sqrt{d\log(k_1H)\log^2(k_1^2 H/\delta_1)}  \right)
\end{align*}
for some universal constants $C_1,C_2$.
Using this, we obtain
\begin{align*}
    b^{(i+1)} &\leq \|\theta^*\| + C_1\frac{ b^{(i)}}{2^{\frac{i-2}{2}}\sqrt{\rho_{\min}Hk_1}} + C_2 \frac{i^{3/2}}{2^{\frac{i-2}{2}}\sqrt{\rho_{\min}Hk_1}} \left(H\sqrt{d\log(k_1H)\log^2(k_1^2 H/\delta_1)}  \right) \\
    & = \|\theta^*\| + b^{(i)}\frac{p}{2^{\frac{i-2}{2}}}\sqrt{\frac{1}{k_1}} + \frac{i^{3/2}q}{2^{\frac{i-2}{2}}} \sqrt{\frac{d}{k_1}},
\end{align*}
with probability at least $1-2\delta_i$, where
\begin{align*}
    p = \frac{C_1}{\sqrt{\rho_{\min}H}} \,\,\,\,\text{and} \,\,\, q = \frac{C_2}{\sqrt{\rho_{\min}}} \sqrt{H\log(k_1H)\log^2(k_1^2 H/\delta_1)}   .
\end{align*}
Hence, we obtain, with probability at least $1-2\delta_i$,
\begin{align*}
    b^{(i+1)} - b^{(i)} \leq \|\theta^*\| - \left(1-\frac{p}{2^{\frac{i-2}{2}}}\sqrt{\frac{1}{k_1}}\right)b^{(i)} + \frac{i^{3/2}q}{2^{\frac{i-2}{2}}} \sqrt{\frac{d}{k_1}}.
\end{align*}
By construction, $b^{(i)} \geq \|\theta^*\|$ (since $\theta^* \in \cB_{k_i}$). Hence, provided $k_1 > \frac{4p^2}{2^i}$, we have
\begin{align*}
    b^{(i+1)} - b^{(i)} \leq \frac{p}{2^{\frac{i-2}{2}}}\sqrt{\frac{1}{k_1}} \|\theta^*\|  + \frac{i^{3/2}q}{2^{\frac{i-2}{2}}} \sqrt{\frac{d}{k_1}}
\end{align*}
with probability at least $1-2\delta_i$.
From the above expression, we have
\begin{align*}
    \sup_i b^{(i)} < \infty
\end{align*}
with probability greater than or equal to 
\begin{align*}
    1-\sum_i 2\delta_i = 1-\sum_i 2\delta/2^{i-1} = 1-4\delta.
\end{align*}
From the expression of $b^{(i+1)}$ and using the above fact in conjunction yield
\begin{align*}
    \lim_{i \rightarrow \infty} b^{(i)} \leq \|\theta^*\|.
\end{align*}
However, by construction $b^{(i)} \geq \|\theta^*\|$. Using this, along with the above equation, we obtain
\begin{align*}
    \lim_{i \rightarrow \infty} b^{(i)} = \|\theta^*\|.
\end{align*}
with probability exceeding $1-4\delta$. So, the sequence $\{b^{(1)},b^{(2)},...\}$ converges to $\|\theta^*\|$ with probability at least $1-4\delta$, and hence our successive refinement algorithm is consistent.

\paragraph{Rate of Convergence:} Since
\begin{align*}
    b^{(i+1)} - b^{(i)} = \Tilde{\mathcal{O}}\left( \frac{i^{3/2}}{2^i} \right),
\end{align*}
with high probability, the rate of convergence of the sequence $\{b^{(i)}\}_{i=1}^\infty$ is exponential in the number of epochs.

\paragraph{Uniform upper bound on $b^{(i)}$:}

We now compute a uniform upper bound on $b^{(i)}$ for all $i$. Consider the sequences $ \Bigg \{\frac{i^{3/2}}{2^{\frac{i-2}{2}}} \Bigg \}_{i=1}^\infty$ and $ \Bigg \{\frac{1}{2^{\frac{i-2}{2}}} \Bigg \}_{i=1}^\infty$, and let $t_j$ and $u_j$ denote the $j$-th term of the sequences respectively. It is easy to check that $\sup_{i}t_i < \infty $ and $\sup_{i}u_i < \infty$, and that the sequences $\{t_i\}_{i=1}^\infty$ and $\{u_i\}_{i=1}^\infty$ are convergent. With this new notation, we have
\begin{align*}
    b^{(2)} \leq \|\theta^*\| + u_1 \frac{p b^{(1)}}{\sqrt{k_1}} + t_1 \frac{q \sqrt{d}}{\sqrt{k_1}}
\end{align*}
with probability at least $1-2\delta$. Similarly, for $b^{(3)}$, we have
\begin{align*}
    b^{(3)} & \leq \|\theta^*\| + u_2 \frac{p b^{(2)}}{\sqrt{k_1}} + t_2 \frac{q \sqrt{d}}{\sqrt{k_1}} \\
    & \leq  \left (1+u_2 \frac{p}{\sqrt{k_1}} \right) \|\theta^*\| + \left( u_1 u_2 \frac{p}{\sqrt{k_1}} \frac{p}{\sqrt{k_1}} b^{(1)} \right) + \left( t_1 u_2 \frac{p}{\sqrt{k_1}} \frac{q \sqrt{d}}{\sqrt{k_1}} + t_2 \frac{q \sqrt{d}}{\sqrt{k_1}}\right)
\end{align*}
with probability at least $1 - 2\delta -\delta = 1-3\delta$. Similarly, we write expressions for $b^{(4)},b^{(5)},...$. Now, provided $k_1 \geq C \left ( \max\{p,q\}  \, b^{(1)} \right)^2 \, d$, where $C $ is a sufficiently large constant, the expression for $b^{(i)}$ can be upper-bounded as
\begin{align*}
    b^{(i)} \leq c_1 \| \theta^* \| + c_2
\end{align*}
for all $i$, where $c_1,c_2 > 0$ are some universal constants, which are obtained from summing an infinite geometric series with decaying step size. The above expression holds
with probability at least $1-\sum_{i} 2 \delta_i=1-4\delta$, which completes the proof.
\end{proof}

Armed with the above result, we finally focus on the regret bound for \texttt{ARL-LIN(norm)}.
\begin{theorem}[Cumulative regret of \texttt{ARL-LIN(norm)}]
\label{thm:norm}
Fix any $\delta \in (0,1]$, and suppose that the hypothesis of Lemma \ref{lem:sequence} holds. Then, with probability exceeding $1- 6\delta$, \texttt{ARL-LIN(Norm)} enjoys the regret bound
\begin{align*}
    R(T) \!=\!  \Tilde{\mathcal{O}}\!\left(\!\| \theta^* \|d\sqrt{H^3T} \log(k_1H)\log(k_1^2H/\delta) \!\right),
\end{align*}
where $T=KH$ denotes the total number of rounds, and $\Tilde{\mathcal{O}}$ hides a $\mathsf{polylog}(K/k_1)$ factor.
\end{theorem}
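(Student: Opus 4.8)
The plan is to condition on the good event $\mathcal{E}$ of Lemma~\ref{lem:sequence} — which has probability at least $1-4\delta$ — and then reduce the analysis to a sum of single-run \texttt{UCRL-VTR-LIN} regret bounds, one per epoch. On $\mathcal{E}$, for every epoch $i\ge 1$ we have $\theta^*\in\mathcal{B}_{k_{i-1}}$, so that $\norms{\theta^*}\le b^{(i)}$ (hence the instance of \texttt{UCRL-VTR-LIN} spawned in epoch $i$ is run with a \emph{valid} norm upper bound), and, for $i\ge 2$, the uniform bound $b^{(i)}\le c_1\norms{\theta^*}+c_2$. Since the pseudo-regret is additive over episodes and each epoch launches a fresh copy of \texttt{UCRL-VTR-LIN}, I would write $R(T)=\sum_{i=1}^N R_i$, where $R_i$ is the regret incurred during epoch $i$ and $N=\log_2(1+K/k_1)=\mathcal{O}(\log(K/k_1))$ is the number of epochs (as in the proof of Lemma~\ref{lem:sequence}).

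Next I would bound each $R_i$ by invoking the regret guarantee of \cite{jia2020model} for a standalone run of \texttt{UCRL-VTR-LIN} with norm bound $b^{(i)}$ over $k_i=2^{i-1}k_1$ episodes at confidence $\delta_i=\delta/2^{i-1}$: with probability at least $1-\delta_i$,
\[ R_i=\mathcal{O}\!\left(b^{(i)}\,d\,H^2\sqrt{k_i}\,\log(k_iH)\,\log(k_i^2H/\delta_i)\right). \]
Plugging in $b^{(i)}=\mathcal{O}(\norms{\theta^*}+1)$, $k_i=2^{i-1}k_1$ and $\delta_i=\delta/2^{i-1}$, and absorbing the $i$-dependence of $\log(2^{i-1}k_1^2H/\delta)$ into a $\poly(i)$ factor, this becomes $R_i=\mathcal{O}\big((\norms{\theta^*}+1)\,d\,H^2\,\sqrt{2^{i-1}k_1}\,\poly(i)\,\log(k_1H)\log(k_1^2H/\delta)\big)$. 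The lone exception is $R_1$, which involves $b^{(1)}=b$; but $k_1$ depends on the problem parameters only polynomially and on $K$ only polylogarithmically, so $R_1$ is $o(\sqrt T)$ and is absorbed into the hidden $\Tilde{\mathcal{O}}$.

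Then I would sum over epochs. Since $\sum_{i=1}^N k_i=K$, the geometric sum satisfies $\sum_{i=1}^N\sqrt{k_i}=\sqrt{k_1}\sum_{i=1}^N 2^{(i-1)/2}\le\frac{\sqrt2}{\sqrt2-1}\sqrt{2^{N-1}k_1}=\mathcal{O}(\sqrt K)$, and the $\poly(i)$ terms cost at most an extra $\polylog(K/k_1)$ factor because $N=\mathcal{O}(\log(K/k_1))$. This gives $R(T)=\Tilde{\mathcal{O}}\big(\norms{\theta^*}\,d\,H^2\sqrt K\,\log(k_1H)\log(k_1^2H/\delta)\big)$, and substituting $T=KH$ (so $H^2\sqrt K=\sqrt{H^3T}$) yields the stated bound. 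For the failure probability, a union bound over the $N$ epoch-wise \texttt{UCRL-VTR-LIN} failure events costs $\sum_{i\ge 1}\delta_i=2\delta$, which together with the $4\delta$ from $\mathcal{E}$ gives total success probability at least $1-6\delta$.

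The hard part is not this summation — it is ensuring that each epoch's base algorithm is simultaneously \emph{valid} (run with $b^{(i)}\ge\norms{\theta^*}$, so that the \cite{jia2020model} confidence ellipsoid genuinely contains $\theta^*$) and \emph{efficient} (run with $b^{(i)}=\mathcal{O}(\norms{\theta^*})$, so the $b^{(i)}$ factor does not inflate $R_i$). Both requirements are precisely what Lemma~\ref{lem:sequence} supplies, via $\theta^*\in\mathcal{B}_{k_{i-1}}$ (by construction of $b^{(i)}$) and the contraction $b^{(i+1)}-b^{(i)}=\Tilde{\mathcal{O}}(i^{3/2}/2^i)$, which themselves hinge on the eigenvalue condition (Assumption~\ref{asm:eigen}, through Lemma~\ref{lem:eigenvalue}) and the burn-in choice of $k_1$. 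A secondary, purely bookkeeping, pitfall is losing control of the $\poly(i)$ and logarithmic factors while summing over epochs, but this only affects the hidden $\polylog(K/k_1)$.
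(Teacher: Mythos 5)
Your proposal is correct and follows essentially the same route as the paper's proof: decompose the regret over epochs, invoke the per-epoch \texttt{UCRL-VTR-LIN} bound of \citet{jia2020model} with norm estimate $b^{(i)}$, control $b^{(i)}\le c_1\norms{\theta^*}+c_2$ via Lemma~\ref{lem:sequence}, sum the geometric series $\sum_i\sqrt{k_i}=\mathcal{O}(\sqrt K)$, and union-bound the failure probabilities to get $1-4\delta-2\delta=1-6\delta$. Your extra care with the first epoch (where the norm estimate is still the crude bound $b$) is a small refinement the paper glosses over, but it does not change the argument.
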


\begin{proof}
The cumulative regret is given by
\begin{align*}
    R(T) \leq \sum_{i=1}^N R^{\texttt{UCRL-VTR-LIN}}(k_i,\delta_i,b^{(i)}),
\end{align*}
where $N$ denotes the total number of epochs $R^{\texttt{UCRL-VTR-LIN}}(k_i,\delta_i,b^{(i)})$ denotes the cumulative regret of \texttt{UCRL-VTR-LIN}, when it is run with confidence level $\delta_i$ and norm upper bound $b^{(i)}$ for $k_i$ episodes. Using the result of
\citet{jia2020model}, we have
\begin{align*}
    R^{\texttt{UCRL-VTR-LIN}}(k_i,\delta_i,b^{(i)}) =\mathcal{O}\left(b_idH^2\sqrt{k_i}\log(k_iH)\log(k_i^2H/\delta_i) \right)
\end{align*}
with probability at least $1-\delta_i$. Now, using Lemma~\ref{lem:sequence}, we obtain
\begin{align*}
    R(T) \le (c_1 \| \theta^* \| + c_2) \sum_{i=1}^N \mathcal{O}\left(dH^2\sqrt{k_i}\log(k_iH)\log(k_i^2H/\delta_i) \right)
\end{align*}
with probability at least $1-4\delta-\sum_{i}\delta_i$. Substituting $k_i = 2^{i-1} k_1$ and $\delta_i = \frac{\delta}{2^{i-1}}$, we obtain
\begin{align*}
    R(T) \leq  (c_1 \| \theta^* \| + c_2) \sum_{i=1}^N \mathcal{O}\left(dH^2 \sqrt{k_i} \,\, \mathsf{poly}(i)\,\,  \log(k_1H)\log(k_1^2H/\delta) \right)
\end{align*}
with probability at least $1-4\delta -2\delta=1-6\delta$.
Using the above expression, we obtain
\begin{align*}
    R(T) & \leq  (c_1 \| \theta^* \| + c_2)  \mathcal{O}\left(dH^2  \log(k_1H)\log(k_1^2H/\delta) \right) \sum_{i=1}^N \mathsf{poly}(i)\sqrt{k_i}\\
    & \leq (c_1 \| \theta^* \| + c_2)  \mathcal{O}\left(dH^2  \log(k_1H)\log(k_1^2H/\delta) \right) \mathsf{poly}(N)\,\, \sum_{i=1}^N  \sqrt{k_i} \\
    & \leq   (c_1 \| \theta^* \| + c_2)  \mathcal{O}\left(dH^2  \log(k_1H)\log(k_1^2H/\delta) \right) \mathsf{polylog}(K/k_1)\,\, \sum_{i=1}^N  \sqrt{k_i} \\
    & \leq  (c_1 \| \theta^* \| + c_2)  \mathcal{O}\left(dH^2  \log(k_1H)\log(k_1^2H/\delta) \right) \mathsf{polylog}(K/k_1) \sqrt{K}\\
    & = \mathcal{O}\left(\| \theta^* \| d\sqrt{H^3 T} \log(k_1H)\log(k_1^2H/\delta)  \mathsf{polylog}(K/k_1) \right),
\end{align*}
where we have used that $N=\cO\left(\log (K/k_1)\right)$, $\sum_{i=1}^N \sqrt{k_i} = \cO(\sqrt{K})$, and $T=KH$.
The above regret bound holds with probability greater than or equal to $1- 6\delta$, which completes the proof.
\end{proof}

\end{document}